\documentclass{article}

\pdfoutput=1
% if you need to pass options to natbib, use, e.g.:
%   \PassOptionsToPackage{numbers, compress}{natbib}
% before loading neurips_2021

% ready for submission
\usepackage[final,nonatbib]{neurips_2023} %!!!! < added to provide arXiv version

% to compile a preprint version, e.g., for submission to arXiv, add add the
% [preprint] option:
%   \usepackage[preprint]{neurips_2021}

% to compile a camera-ready version, add the [final] option, e.g.:
%   \usepackage[final]{neurips_2021}

% to avoid loading the natbib package, add option nonatbib:
 % \usepackage[nonatbib]{neurips_2023}

\usepackage[utf8]{inputenc} % allow utf-8 input
\usepackage[T1]{fontenc}  % use 8-bit T1 fonts
\usepackage{xr-hyper}
\usepackage{hyperref}    % hyperlinks
\usepackage{url}      % simple URL typesetting
\usepackage{booktabs}    % professional-quality tables
\usepackage{amsfonts}    % blackboard math symbols
\usepackage{nicefrac}    % compact symbols for 1/2, etc.
\usepackage{microtype}   % microtypography
\usepackage{xcolor}     % colors
\usepackage{colortbl}
\usepackage{bm}
\usepackage{sidecap}
\usepackage{adjustbox}
\usepackage{subfig}

% \setcitestyle{square} % added by rfa!!!!!!!!!!!!!!!!!!!!!!!!! %commented by NPB when switching to bibtex file

\usepackage{amsmath} % added by rfa!!!!!!!!!!!!!!!!!!!!!!!!!
\usepackage{amsthm} % added by rfa!!!!!!!!!!!!!!!!!!!!!!!!!
{
   \theoremstyle{plain}
   
 }
\usepackage{physics} % added by rfa!!!!!!!!!!!!!!!!!!!!!!!!! to provide \norm

\usepackage{ant_cmds2}
\renewcommand{\vec}[1]{\bm{#1}}

%%%% regexp to look for double spaces:
% \w[^&][ ]{2}[^&]\w

% for subfile bib
\def\biblio{\bibliographystyle{abbrv}\bibliography{cited_refs}}  % 

% reduce vertical whitespace around figures
\setlength{\intextsep}{5pt}
\setlength{\textfloatsep}{10pt}

\includeonly{content, appendix}
% copy clearpage command so we can temporarily disable it in \include

%\title{GP Nearest Neighbour Regression: Robustness and Extreme Scalability }
\title{Leveraging Locality and Robustness to Achieve Massively Scalable Gaussian Process Regression}

% The \author macro works with any number of authors. There are two commands
% used to separate the names and addresses of multiple authors: \And and \AND.
%
% Using \And between authors leaves it to LaTeX to determine where to break the
% lines. Using \AND forces a line break at that point. So, if LaTeX puts 3 of 4
% authors names on the first line, and the last on the second line, try using
% \AND instead of \And before the third author name.

\author{%
  Robert Allison \\
  Department of Mathematics\\
  Bristol University\\
  \texttt{marfa@bristol.ac.uk} \\
\And
  Anthony Stephenson \\
  Department of Mathematics\\
  Bristol University\\
\And
  Samuel F \\
  Alan Turing Institute \\  
\And
  Edward Pyzer-Knapp \\
  IBM Research 
}

 % examples of more authors
 % \And
 % Coauthor \\
 % Affiliation \\
 % Address \\
 % \texttt{email} \\
 % \AND
 % Coauthor \\
 % Affiliation \\
 % Address \\
 % \texttt{email} \\
 % \And
 % Coauthor \\
 % Affiliation \\
 % Address \\
 % \texttt{email} \\
 % \And
 % Coauthor \\
 % Affiliation \\
 % Address \\
 % \texttt{email} \\

\begin{document}
\def\biblio{}

\maketitle

\begingroup
\let\clearpage\relax
%auto-ignore
\begin{abstract}
The accurate predictions and principled uncertainty measures provided by GP regression incur $\bigO{n^3}$ cost which is prohibitive for modern-day large-scale applications. This has motivated extensive work on computationally efficient approximations. We introduce a new perspective by exploring robustness properties and limiting behaviour of GP nearest neighbour (GPnn) prediction. We demonstrate through theory and simulation that as the data-size $n$ increases, accuracy of estimated parameters and GP model assumptions become increasingly irrelevant to GPnn predictive accuracy. Consequently, it is sufficient to spend small amounts of work on parameter estimation in order to achieve high MSE accuracy, even in the presence of gross misspecification. In contrast, as $n \rightarrow \infty$, uncertainty calibration and NLL are shown to remain sensitive to just one parameter, the additive noise-variance; but we show that this source of inaccuracy can be corrected for, thereby achieving both well-calibrated uncertainty measures and accurate predictions at remarkably low computational cost. We exhibit a very simple GPnn regression algorithm with stand-out performance compared to other state-of-the-art GP approximations as measured on large UCI datasets. It operates at a small fraction of those other methods' training costs, for example on a basic laptop taking about 30 seconds to train on a dataset of size $n = 1.6 \times 10^6$.
\end{abstract}

\section{Introduction}
\label{intro}  
We first briefly review the computational cost of exact GP regression and the motivation for this paper:
Given $n$ training samples $X,\bm{y}$, where $X \in \mathbb{R}^{n\times d}$ has feature vector $\bm{x}_i\in \mathbb{R}^d$ in its $i$'th row and $\bm{y} \in \mathbb{R}^n$, exact GP regression \cite{GP_book} makes use of an $n \times n$ gram matrix $K = K_{X,\bm{\theta}}$ constructed from a pre-specified positive definite covariance function $c(\cdot,\cdot):\mathbb{R}^d \times \mathbb{R}^d \rightarrow \mathbb{R}_{+}$ together with length-scale, additive-noise variance and kernel-scale ``hyperparameters'' $\bm{\theta} = (l,\sigma_\xi^2,\sigma_f^2)$. In the training phase estimates of the hyperparameters, $\hat{\bm{\theta}} = (\hat{l},\hat{\sigma}_\xi^2,\hat{\sigma}_f^2)$, are obtained by minimising the loss function

\begin{equation}
  \mbox{loss}(\bm{\theta}) = -\log p(\bm{y}|X,\bm{\theta}) = \frac{1}{2}\{ \bm{y}^T K_{\bm{\theta}}^{-1} \bm{y} + \log|K_{\bm{\theta}}| + n \log(2 \pi) \}.\label{eq:gp_los_func}
\end{equation}
Then for subsequent predictions the predictive distribution at a point $\bm{x}^* \in \mathbb{R}^d$ is defined by 
\begin{align}
  y^*\given X,\bm{y} &\sim \mathcal{N}(\mu^*,\predvar) \label{eq:gp_pred}\\
  \mu^* &= {\kstar}^T K ^{-1}\bm{y} \label{eq:gp_mean_pred}\\
  \predvar &= \hat{\sigma}_f^2 - {\kstar}^T K^{-1}\kstar + \hat{\sigma}_\xi^2 \label{eq:gp_var_pred}
\end{align}
where $K = K_{\hat{\bm{\theta}}}$ with components \([K]_{ij}=k_{\hat{\bm{\theta}}}(\bm{x}_i,\bm{x}_j)\); the vector $\kstar$ has components $k^*_i= k_{\hat{\bm{\theta}}}(\bm{x}_i,\bm{x}^*)$, and \(k_{\bm{\theta}}(\bm{x},\bm{x}') = \sigma_f^2c(\bm{x}/l,\bm{x}'/l) + \delta_{\bm{x},\bm{x}'}\sigma_\xi^2\) with a ``normalised'' covariance function \(c(\cdot,\cdot)\) such that \(c(\bm{x},\bm{x})=1\). The derivation of these steps is based on the assumption that the underlying random field is Gaussian, as is the additive noise with variance $\sigma_\xi^2$.

The single cost-$\bigO{n^3}$ step of inverting $K$ is needed repeatedly to compute the loss. Sophisticated implementations reduce this toward $\bigO{n^2}$ (\cite{exact_big_GP}), but even that cost is generally impractical for $n > 10^6$. For a survey of numerous GP approximations and their reduced costs see \cite{GP_approx_review}. 

Machine learning methods must tackle massive data problems to handle many modern day applications. Revolutionary developments in neural network methodologies achieve this, but Bayesian predictive methodologies, in particular GP regression with its major advantages of robustness and uncertainty measures, are somewhat behind the curve. This motivates development of fast implementations retaining the accuracy and well-principled uncertainty of exact GPs.

\section{Background and Paper Outline}

A feature common to all mainstream GP approximations is that training and prediction processes make joint use of the same underlying mathematical constructions. In the ``subset-of-data'' method the {\it same} subset of data is used both for parameter estimation and prediction. Similarly, in the various Bayesian committee methods (\cite{amalg_SOD}) hyperparameters are estimated using a collection of subsets of data and then the {\it same} subsets are used in combination to make predictions. In the variational (\cite{Hensman,Titsias}) and other inducing point methods parameters are estimated using a low rank approximation to the kernel gram matrix and then the {\it same} low-rank matrix approximation is used to make predictions. Despite being almost universally adopted there is no obvious reason why constraining algorithms to use the same constructions for estimation and prediction will help rather than hinder the end goal of high performance at low cost. Whilst there are some passing mentions of decoupling prediction and estimation in the literature - e.g. \cite{Stein2004,Bachoc2022,eval_framework} - it has not been adopted as a mainstream approach.

Our first observation is that allowing parameter-estimation and prediction processes to become decoupled may provide the flexibility to greatly improve cost-accuracy trade-off. As shown in \autoref{fig:flowchart}, GP approximations first obtain a point estimate of the kernel hyperparameters $\hat{\bm{\theta}}$ from training data and then feed $\hat{\bm{\theta}}$ into a predictive process. Our end-goal is only to obtain accurate and well-calibrated {\it predictive distributions} of $y^*$ at each target point $\bm{x}^*$; obtaining accurate parameter estimates is {\it not} a goal in itself. It follows that the computational budget devoted to parameter estimation need only be sufficient to provide parameters capable of delivering accurate and well-calibrated predictions. 

    
\begin{figure}[ht]
        \centering
        \begin{tikzpicture}[
          inner/.style={draw,fill=blue!5,thick,inner sep=3pt,minimum width=8em},
          outer/.style={draw=gray,dashed,fill=green!1,thick,inner sep=5pt},
          ]
          \tikzset{node distance = 0.75cm and 1cm}
            \def \nodeheight {0.75cm};
            \node[draw,
                inner,
                minimum width=2cm,
                minimum height=\nodeheight](param) at (0,0) {Parameter estimation};

            \node[,
                above= of param,
                minimum width=2cm,
                minimum height=\nodeheight](train1) {Training data \((X,\bm{y})\)};

            \node[draw,
                inner,
                right=of param,
                minimum width=2cm,
                minimum height=\nodeheight](pred) {Prediction};

            \node[,
                above= of pred,
                minimum width=2cm,
                minimum height=\nodeheight](train2) {Training data \((X,\bm{y})\)};

            \node[,
                below= of pred,
                minimum width=2cm,
                minimum height=\nodeheight](target) {Target point \(\bm{x}^*\)};

            \node[draw,
                    right= of pred,
                    minimum width=2cm,
                    minimum height=\nodeheight] (dist) {Distribution of \(y^*\given X,\bm{y}\) at
                    \(\bm{x}^*\)};
                    
            %arrows
            \draw[-latex] (param) edge [below, "$\hat{\bm{\theta}}$"] (pred);

            \draw[-latex] (pred) edge (dist);
           
            \draw[-latex, shorten >=0pt] (train1) edge (param);

            \draw[-latex, shorten >=0pt] (train2) edge (pred);

            \draw[-latex] (target) edge (pred);

            \begin{pgfonlayer}{background}
                \node[outer,fit=(param) (pred)] (joint) {};
            \end{pgfonlayer}

    \end{tikzpicture}
    \caption{Flowchart of the GP regression procedure. The dashed box indicates the usual approach of combining the parameter estimation and prediction tasks under one strategy.}
    \label{fig:flowchart}
\end{figure}

This need not mean that $\hat{\bm{\theta}}$ is an accurate estimate of the parameters which leads us to the second observation and main theoretical component of this paper: In \autoref{robustness}, theory and simulations reveal that under widely applicable circumstances, as $n$ increases the mean squared error (MSE) predictive accuracy obtained from GP nearest neighbour prediction becomes increasingly insensitive to model misspecification, i.e. insensitive to the wrong choice of covariance function, to the choice of $\hat{l}$, $\hat{\sigma}_f^2$ and $\hat{\sigma}_\xi^2$ and even insensitive to departures from Gaussian model assumptions made for the underlying stochastic process and additive noise. Similarly the negative log likelihood (NLL) predictive-accuracy becomes insensitive to all of those factors {\it apart from} the variance of the additive noise $\hat{\sigma}_\xi^2$. In \ref{parameter_estimation} we describe a simple calibration step that corrects for the latter inaccuracy thereby achieving near optimal limiting NLL values in addition to well-calibrated uncertainty measures whilst leaving the well-behaved MSE values completely unaltered. We apply these overall observations to construct a highly efficient, accurate and well calibrated regression algorithm in \autoref{simple_implementation}.

\textbf{Our key contributions:} Demonstration of GPnn robustness against model and parameter misspecification through theory and simulation (\ref{robustness}); derivation of explicit formulae for the limiting MSE, NLL and calibration performance of GPnn as $n \rightarrow \infty$ (\ref{theory}); translation of this theory into a new GP approximation framework with stand-out performance relative to other state-of-the-art GP approximations (\ref{simple_implementation},\ref{real_world_data}); a simple generic method for re-calibrating uncertainty measures in GP regression with immediate applications to improving calibration of other GP approximations such as SVGP (\ref{parameter_estimation}); achievement of massive scalability for GPs, for example a $100 \times$ speed-up over state-of-the-art methods on a $1.6 \times 10^6$ training set whilst also improving upon their performance (\ref{real_world_data}); demonstrating that provably best possible MSE, NLL and calibration performance can be closely approached on data that is grossly misspecified relative to GP model assumptions (\ref{massive_datasets}).

\section{Performance Measures, Weak and Strong Calibration}
\label{predictive_measures}
Along with many other GP publications we use mean squared error MSE (or its square root RMSE) and negative log likelihood (NLL), both computed from held-aside test data, to assess predictive performance. These are simply the mean values of $e^*_i=(y^*_i - {\mu_i^*})^2$ and $l^*_i= 0.5 \cdot (\log \predvar_i + (y_i^*-\mu_i^*)^2/\predvar_i + \log2\pi)$ respectively. However, we find those measures alone inadequate for determining how {\it well calibrated} a predictive distribution is. We define ``weakly calibrated'' prediction to mean that $\Eycondxy{(y^*-\mu^*)^2/\predvar } = 1$ and accordingly use ``calibration'' to be a measure of how well the average value of $z^*_i= (y_i^*-\mu_i^*)^2 / \predvar_{i}$ over test-data agrees with 1. This choice of metric (also made use of in \cite{Jankowiak2020}) can be motivated as follows: For a well-calibrated GP the expected squared deviation of $y_i^*$ from $\mu_i^*$ should match the corresponding predictive variance ${\sigma_i^*}^2$ \eqref{eq:gp_pred}, i.e.  $\Eycondxy{z^*} =1$.  Hence, observing an average of $z^*$ values close to 1 is consistent with a {\it necessary} condition for effective calibration. In practice, we find that GP approximation methods can fall well short of this condition (e.g. see the LHS plot in \autoref{fig:results}, and \autoref{tab:metrics_all_long} for those results in tabular form) whilst this is not evident from their MSE and NLL values alone. A better measure of calibration (``strong-calibration'') would have been to see how well percentiles of the predictive distributions agree with those observed in test data, e.g. see \cite{Song2019}, but we defer such a refinement to future work.

\section{Prediction Method and Sources of Misspecification}

\subsection{GP Nearest Neighbour Prediction}
\label{exact_NN_prediction}
We now describe what we mean by ``GP nearest neighbour (GPnn) prediction''. Assume that we are given parameters $\hat{\bm{\theta}} = (\hat{l},\hat{\sigma}_\xi^2,\hat{\sigma}_f^2)$ obtained from the parameter estimation phase of \autoref{fig:flowchart}. Then to compute the estimated pointwise-distribution of $y^*$ at $\bm{x}^*$ indicated in \autoref{fig:flowchart} we find the $m$ nearest training-set neighbours $N = N(\bm{x}^*)$ to $\bm{x}^*$ and apply exactly the same GP prediction formulae as in \eqref{eq:gp_pred}, \eqref{eq:gp_mean_pred} and \eqref{eq:gp_var_pred} but with $X \in \mathbb{R}^{n \times d}$ replaced by $N \in \mathbb{R}^{m \times d}$ and $\bm{y} \in \mathbb{R}^n $ replaced by $\bm{y}_N \in \mathbb{R}^m$. Note that in this setup conditioning on \(N(\bm{x}^*)\) is equivalent to conditioning on the full input matrix \(X\). We obtain:

\begin{align}
  % y^* &\sim \mc{N}(\mu_N^*,\predvar_N)\label{eq:gpnn_pred} \\
  y^*\given N(\bm{x}^*),\bm{y}_N &\sim \mc{N}(\mu_N^*,\predvar_N)\label{eq:gpnn_pred} \\
  \mu_N^* &= \kstar[\hat]_N^T \hat{K}_N^{-1}\bm{y}_N \label{eq:gpnn_mean_pred}\\
  \predvar_N &= \hat{\sigma}_f^2 - \kstar[\hat]_N^T \hat{K}_N^{-1}\kstar[\hat]_N + \hat{\sigma}_\xi^2 \label{eq:gpnn_var_pred}
\end{align}

where we have used hatted notation in a generic manner to cover all the potential sources of misspecification (\ref{misspec}) that might arise when we carry out these predictions. The $\mu_N^*,\predvar_N$ parameters are substituted for $\mu^*,\predvar$ when computing the performance measures described in \autoref{predictive_measures}.

\textbf{Note:} In this paper we use Euclidean distance for nearest neighbour assignment but more generally could employ a metric defined by the covariance function - see \ref{appx:gpnn}. For the covariance functions used in this paper these metrics are ``equivalent'' because one is a monotonic function of the other.

In our algorithmic implementations we replace an exact $m$ nearest neighbour algorithm with a much more efficient {\it approximate} nearest neighbour algorithm as discussed in \autoref{simple_implementation}. However for the purpose of the theoretical analysis of robustness in \autoref{robustness} this distinction can be ignored. 

\subsection{Sources of Misspecification}
\label{misspec}
For the remainder of the paper we extend our theory and notation to encompass several (possibly simultaneous) sources of misspecification: standard GP theory assumes that data comes from a latent Gaussian random field $\mc{GRF}[\sigma_f^2 c(./l,./l)]$ specified by covariance function $c(\cdot,\cdot)$ and parameters $l,\sigma_f^2$. The construction of the matrix $K_{\bm{\theta}}$ in \autoref{intro} assumes data to have arisen from this $\mc{GRF}$ with i.i.d $\mc{N}(0,\sigma_\xi^2)$ additive noise. Henceforth, we limit covariance functions $c(\bm{x},\bm{x}')$ to be stationary, i.e. to vary only with $(\bm{x}-\bm{x}')$. The forms of (possibly simultaneous) misspecifications to be accounted for in the theoretical treatment of \ref{theory} are: (a) parameter $\sigma_\xi^2$ wrongly specified as $\hat{\sigma}_\xi^2$, (b) (normalised) covariance function $c(\cdot,\cdot)$ wrongly specified as $\hat{c}(\cdot,\cdot)$, (c) parameters $l,\sigma_f^2$ misspecified as $\hat{l},\hat{\sigma}_f^2$ (relevant only if $c(\cdot,\cdot)$ {\it not} misspecified), (d) true additive noise is {\it not} Gaussian and (e) the data is generated by a non-Gaussian weakly stationary random field $\mc{WSRF}$ rather than a $\mc{GRF}$.

\section{GP nearest neighbour Limits and Robustness}
\label{robustness}
In this section we investigate the behaviour of MSE, NLL and calibration for GPnn prediction as $n \rightarrow \infty$, showing how all of these performance measures become increasingly less sensitive to hyperparameter accuracy, kernel choice and the above departures from the GP model assumptions. 

\subsection{Theory}
\label{theory}
\textbf{Assumptions:} The true generative model from which the data arises is \(y_i=f(\bm{x}_i) + \xi_i\) with \(\xi_i\overset{\textrm{iid}}{\sim}P_\xi\), \(f(\bm{x})\sim \mc{WSRF}[\sigma_f^2 c(./l,./l))]\) and \(y_i\given f(\bm{x}_i) \sim P_\xi\) where the variance of the distribution $P_\xi$ is $\sigma_\xi^2$. Neither the $\mc{WSRF}$ nor the additive noise distribution $P_\xi$ need be Gaussian. The training $\bm{x}$ values are i.i.d. The MSE, NLL and calibration statistics on the test set are derived according to the nearest neighbour GP prediction process (\ref{exact_NN_prediction}) and subject to any/all forms of misspecification in \ref{misspec}. Additionally, we assume that if and only if the \(m^{th}\) nearest neighbour converges to the test point under \(c\), then it also converges under \(\hat{c}\) (\ref{detailed_proofs}:\autoref{def:weakly_faithful}).

\textbf{Result:} Given a size-$n$ training set $X$ and test point $\bm{x}^*$ in the support (\ref{detailed_proofs}:\autoref{def:support}) of the measure of $\bm{x}$, let $f^{\MSE}_n(\hat{\bm{\theta}})= \Ey {e_N^*}$, $f^{\NLL}_n(\hat{\bm{\theta}})= \Ey {l_N^*}$, $f^{\CAL}(\hat{\bm{\theta}}) = \Ey{z_N^*}$; where expectations are w.r.t. the true generative process for $\bm{y}$ and $y^*$ and the performance measures $e_N^*,l_N^*,z_N^*$ (\autoref{predictive_measures}) are for the nearest neighbour prediction process.
Note that these are the expected (rather than mean) values of the performance measures described in \autoref{predictive_measures} and the dependence on \(n\) is implicit in the construction of the nearest neighbour sets $N = N_m(\bm{x}^*)$ used for prediction. Then we have:

\begin{theorem}[GPnn limits]
  \label{thm:gpnn_limit} 
  As \(n\rightarrow\infty\), \(f_n^{\MSE},f_n^{\CAL},f_n^{\NLL} \rightarrow f_\infty^{\MSE},f_\infty^{\CAL},f_\infty^{\NLL}\) a.e w.r.t. the (i.i.d.) measure on $\mathbf{x} \in X$ and $\mathbf{x^*}$, and pointwise as functions of $\hat{\bm{\theta}}$ where:
  \begin{enumerate}[label=(\roman*)]
    \item  \( f_\infty^{\MSE}(\hat{\bm{\theta}}) = \sigma_\xi^2(1+m^{-1}) \pm \bigO{m^{-2}}\) \label{eq:mse_lim}
    \item  \( f_\infty^{\CAL}(\hat{\bm{\theta}}) = \frac{\sigma_\xi^2}{\hat{\sigma}_\xi^2} \pm \bigO{m^{-2}}\) \label{eq:cal_lim}
    \item \(  f_\infty^{\NLL}(\hat{\bm{\theta}}) =\half \left\{ \log\left(\hat{\sigma}_\xi^2(1 + m^{-1})\right) + \frac{\sigma_\xi^2}{\hat{\sigma}_\xi^2} +\log2\pi \right\} \pm \bigO{m^{-2}} \). \label{eq:nll_lim}
  \end{enumerate}
  Setting \(\hat{\bm{\theta}}=\bm{\theta}\) (and in particular \(\hat{\sigma}_\xi^2=\sigma_\xi^2\)) in the above provides matched-parameter limiting results.
\end{theorem}
\begin{proof}[Proof sketch]
It is quite straightfoward to derive expressions for each of the expectations \(f_n^{\MSE},f_n^{\CAL},f_n^{\NLL}\) since these only depend on the known marginal covariance matrices of the (misspecified) $\mc{WSRF}$. We then use results concerning asymptotic convergence of Euclidean nearest neighbours, in combination with some standard linear algebra results and continuity properties, to obtain the stated limits. Note that in the expression for \( f_\infty^{\MSE}(\hat{\bm{\theta}})\) above the right hand side must always exceed $\sigma_\xi^2$ since this is an absolute lower bound on MSE performance; likewise \( f_\infty^{\CAL}(\hat{\bm{\theta}})\) is constrained to be non-negative. See \prooflink[full proof (\ref{appx:gpnn}).]{proof:gpnn_limit}.
\end{proof}

\textbf{Interpretation:} The MSE results of \autoref{thm:gpnn_limit} show that to within a small factor (e.g. $m^{-1} = 0.0025$ when $m=400$ as for all reported runs of our algorithm) the {\it best possible} MSE will be achieved in the limit. The NLL results also tell us (by setting $\sigma_\xi^2 = \hat{\sigma}_\xi^2$) what the best possible limiting NLL value is, but only according to the {\it possibly misspecified} Gaussian model. The corrupting influence of an incorrect value of $\hat{\sigma}_\xi^2$ on the limiting NLL value is clearly evident from the expression for \(f_\infty^{\NLL}\)  and the picture is similar for calibration.
\begin{remark} 
\label{rem: indep_l}
\autoref{thm:gpnn_limit} shows that isotropic (e.g. RBF and Matérn) kernels converge to the best possible MSE as $n \rightarrow \infty$ even on data generated with independent lengthscales on each $\bm{x}$ coordinate.
\end{remark}
% \vspace{-11pt}

Note that \ref{thm:gpnn_limit} refers to pointwise convergence whereas we believe uniform convergence results should also be obtainable, e.g. perhaps of the form (or similar):
\begin{conjecture} $\Ex{f_n^{\MSE}(\hat{\bm{\theta}})} \rightarrow f_\infty^{\MSE} = \sigma_\xi^2(1+m^{-1}) \pm \bigO{m^{-2}} $ {\it uniformly} as a function of $\hat{\bm{\theta}}$ as $n \rightarrow \infty$.
\end{conjecture}
% \vspace{-11pt}
This particular conjecture would hold, for example, if the l.h.s. were shown to be a continuous function of $\hat{\bm{\theta}}$ reducing monotonically and pointwise to the limit with $n$ (by Dini's theorem). We also have initial results on rate of convergence in \autoref{thm:gpnn_limit} which we defer to a later publication once more fully extended.

\subsection{Simulation of Limits and Robustness at Scale}

\noindent\begin{figure}
  \centering
  \includegraphics[scale=0.4]{\detokenize{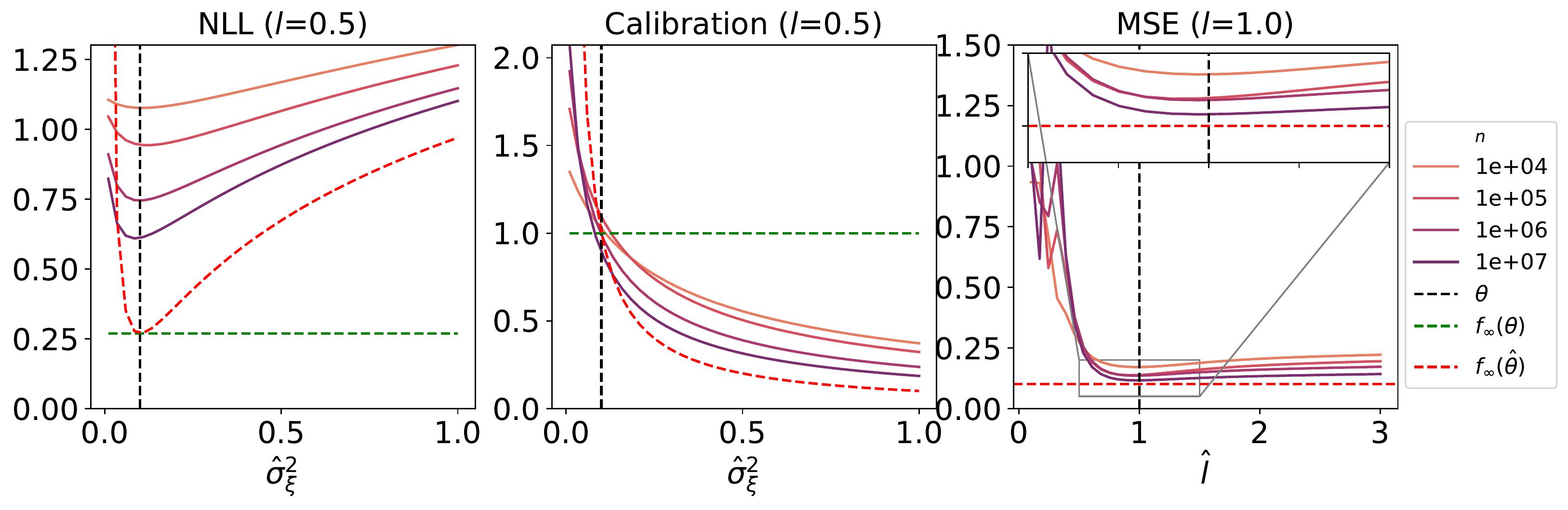}}
  \caption{Behaviour of performance metrics as functions of kernel hyperparameters for increasing training set sizes \(n\). The black dashed line denotes the true parameter value; the red dashed line shows the limiting behaviour as \(n\rightarrow\infty\) and the green dashed line shows the limiting behaviour when the hyperparameters are correct (the red and green dashed lines coincide for MSE). True \(l\) is shown in the title; additionally \(\sigma_\xi^2=0.1\), \(\sigma_f^2=0.9\), \(d=20\). When not varied, the assumed parameters are \(\hat{\sigma}_\xi^2=0.2\), \(\hat{\sigma}_f^2=0.8\), \(\hat{l}=l\). Finally we generate the input data from the measure \(P_\mathbf{x}=\mc{N}\left(0,\frac{1}{d} I_d\right)\).}
  \label{fig:sim_limits}
\end{figure}

% \noindent\begin{figure}
%   \centering
%   \includegraphics[scale=0.4]{\detokenize{}}
%   \caption{Behaviour of performance metrics as a function of \(n\) over a set of assumed parameter values \(\bm{\hat{\theta}}\). The green line represents the asymptotic behaviour of the associated metric (as in \ref{fig:sim_limits}). The lengthscale is given in the title of each subfigure, as is the choice of assumed parameter. Notice that as the ratio of assumed-to-true parameter approaches 1, the curves move closer to the green dashed line.}
%   \label{fig:sim_limits_N}
% \end{figure}
At first sight it seems infeasible to demonstrate the above robustness and limit properties empirically on GP data-sets of size $10^6$ or above. One major obstacle being the generation of GP synthetic datasets at this size which is computationally prohibitive even allowing for the speedups described in \cite{synth_data2022}. Fortunately we can avoid the need for large-scale data-generation, in addition to achieving other major efficiencies, by adopting the approach described in \autoref{alg:gpnn_robust_sim}.

  \begin{algorithm}[ht]
    \;\textbf{Input:} $n$ (training size), $n^*$ (test size), $m$ (number of nearest neighbours), $d$ ($\bm{x}$-dimension), $c(\cdot,\cdot)$ (generative covariance function), $\bm{\theta}$ (generative kernel parameters), $P(.)$ ($\bm{x}$-distribution) 
    
     \;\textbf{Set-up phase A:}
     \begin{enumerate}
       \item Produce the $n$ training $\bm{x}$-values and $n^*$ test $\bm{x}^*$-values by sampling independently from $P(.)$.
       \item Find the $n^*$ size- $m$ nearest neighbour sets $N(\bm{x}_i^*)$ for test points $\bm{x}_i^*$ ($i = 1,\dots,n^*$).
       \item Generate $n^*$ GP samples $\bm{y}_i \in \mathbb{R}^{m+1}$ from $\mathcal{N}(0,K_{U,\bm{\theta}})$ where
       $U = N(\bm{x}_i^*) \cup \{\bm{x}_i^*\}$.
       \item Store $(N(\bm{x}_i^*),\{y_{i,j}\}_{j=1}^m)$ which will be used for predictions in Phase B together with $y_i^* = y_{i,(m+1)} \in \mathbb{R}$, the true $y$ value at $\bm{x}_i^*$.
     \end{enumerate}
     \;\textbf{Robustness Evaluation phase: B}

     \quad For several choices of assumed covariance function $\hat{c}(\cdot,\cdot)$ and parameters $\hat{\bm{\theta}}$: 
     
     \quad For $i=1,\dots,n^*$ do:
     \begin{enumerate}
       \item Compute predictive distribution $\mathcal{N}(\mu_i^*,\predvar_i)$ at $\bm{x}_i^*$ based on $\hat{c}$, $\hat{\bm{\theta}}$ and $(N(\bm{x}_i^*),\{y_{i,j}\}_{j=1}^m)$.
      \item Update NLL, MSE and calibration statistics using $\mu_i^*,\predvar_i$ and $y_i^*$.
     \end{enumerate}
     \;\textbf{Output:} NLL, MSE and calibration stats for range of covariance and parameter assumptions. 
     \caption{Simulation of GPnn Robustness and Limiting Behaviour}
     \label{alg:gpnn_robust_sim}
   \end{algorithm}
The simulation algorithm gains its efficiency by exploiting the locality of the GPnn prediction process at $\bm{x}^*$ whereby the predictive distribution only makes use of a size-$(m+1)$ marginal distribution of the full distribution of $(\bm{y},y^*)$ over $\mathbb{R}^{n+1}$. (By the definition of a Gaussian process this marginal is a (low dimensional) multivariate Gaussian distribution from which samples can cheaply be generated). The following lemma is proved in \autoref{proof:alg1}:

\begin{lemma}[Algorithm 1 validity]
\label{lem:alg1}
The MSE, NLL and calibration estimates returned by Algorithm 1 are equally valid
to those that would be obtained by applying the full GPnn predictive process (exactly as described in \autoref{exact_NN_prediction}) to synthetic data sets of size $n$. 
\end{lemma}

\autoref{fig:sim_limits} shows how the observed performance metrics approach the limiting behaviour as \(n\) increases. In particular, from the RHS plot we see that as $n$ increases, MSE becomes increasingly insensitive to departure of $\hat{l}$ from the true value $l = 1$. This is a consequence of (what appears to be uniform) convergence of MSE toward constant value $\sigma_\xi^2 = 0.1$ (the best achievable MSE) as predicted by \autoref{thm:gpnn_limit} \ref{eq:mse_lim}. In practical terms: once a practitioner selects a particular kernel family, the accuracy of the hyperparameter $\hat{l}$ becomes less and less critical to MSE predictive accuracy with increasing $n$, so that expenditure on estimating it accurately provides diminishing returns.

The interpretation of the leftmost two plots is similar albeit somewhat more involved: The dotted red lines show the asymptotic dependence on the misspecification of the noise-variance as predicted by \autoref{thm:gpnn_limit} \ref{eq:cal_lim} and \ref{eq:nll_lim}, i.e. plots of $ y = \frac{0.1}{\hat{\sigma}_\xi^2}$ and $y =\half \{ \log\hat{\sigma}_\xi^2 + \frac{0.1}{\hat{\sigma}_\xi^2} +\log2\pi \} $ where $0.1 = \sigma_\xi^2$ is the true noise value used to generate the synthetic GP data.
We again see evidence of uniform convergence toward this limiting behaviour with increasing \(n\). The green horizontal dotted lines show the limiting values of NLL and calibration ($y =\half \{ \log\sigma_\xi^2 + 1 +\log2\pi \} $ and $y = 1$ respectively) that can be achieved if the incorrect $\hat{\sigma}_\xi^2$ value is replaced by the correct value $\sigma_\xi^2$. This underlines the importance of estimating this particular parameter more accurately in order to obtain improved NLL and uncertainty calibration for large $n$.
Further plots from \autoref{alg:gpnn_robust_sim}, showing dependence of each metric on all of the parameters, are given in \autoref{fig:sim_limits_grid}.

\section{A Highly Scalable GP Nearest Neighbour Regression Algorithm}
\label{simple_implementation}
\subsection{Parameter Estimation}
\label{parameter_estimation}
\textbf{Parameter Estimation Phase 1} The first step of parameter estimation (\autoref{fig:flowchart}) involves randomly selecting a small subset $E$ of the training data to obtain a first-pass estimate $\hat{\bm{\theta}} = (\hat{l},{\hat{\sigma}}_\xi^2,{\hat{\sigma}}_f^2)$. Small subsets yield sub-optimal $\hat{\bm{\theta}}$ values, yet as shown in \ref{real_world_data}, these are capable of yielding strong MSE performance due to the robustness properties of \autoref{robustness}. We use the method in section 3.1 of \cite{deisenroth2015distributed} to estimate parameters from $E$, randomly partitioning $E$ into $w$ size-$s$ subsets ($ws = e$) and using a block diagonal approximation (with $w$ blocks of size $s \times s$ ) to the full $e \times e$ gram matrix. For \autoref{tab:metrics_best_dist} we set $e = 3000,s = 300,m = 10$.  For strong computational efficiency we set $e=|E|$ to a small {\it constant} value no matter the size of $n$. Thus as $n$ grows, an increasingly small portion of the data is used for this phase of parameter estimation and the associated cost does not increase with $n$. Note that other choices of cheap parameter estimation could be substituted here.

\textbf{Parameter Estimation Phase 2 (calibration)} As shown in \autoref{robustness}, NLL and calibration performance derived from $\hat{\bm{\theta}} = (\hat{l},{\hat{\sigma}}_\xi^2,{\hat{\sigma}}_f^2)$ remain very sensitive to inaccuracies in $\hat{\sigma}_\xi^2$. An additional ``calibration step'' is used to refine those parameters: We randomly select a size $c$ calibration set $C$ (which is {\it otherwise unused}) from the training data and proceed according to \autoref{alg:gpnn_calibration}.

  \begin{algorithm}[h]
     \,\,\,\textbf{Input:} A size $c$ subset $C$ of $(\bm{x}^*,y^*)$ pairs from the training data, parameters $\hat{\bm{\theta}} = (\hat{l},\hat{\sigma}_\xi^2,\hat{\sigma}_f^2)$.
    \begin{enumerate}[itemsep=1pt]
      \item For each $(\bm{x}_i^*,y_i^*) \in C$ use the efficient GPnn predictive algorithm of \ref{efficient_nn_prediction}, with covariance function $\hat{c}(.,.)$ and parameters $\hat{\bm{\theta}} = (\hat{l},\hat{\sigma}_\xi^2,\hat{\sigma}_f^2)$, to obtain an estimate of the mean and variance, $\mu_i^*,\predvar_i$ of the predictive distribution of $y_i^*$ at $\bm{x}_i^*$.
      \item Compute $\alpha = \frac{1}{c}\sum_{i=1}^c \frac{(y_i^* - \mu_i^*)^2}{\predvar_i}$.
    \end{enumerate}    
    \,\,\,\textbf{Output:} Calibrated parameters $\hat{\bm{\theta}}' = (\hat{l},\alpha \cdot \hat{\sigma}_\xi^2, \alpha \cdot \hat{\sigma}_f^2)$.
     \caption{Calibration of Predictive Distribution}
     \label{alg:gpnn_calibration}
   \end{algorithm}
Note that this process not only adjusts the noise variance estimate $\sigma_\xi^2$ but also the kernel scale parameter $\sigma_f^2$. In so doing it simultaneously calibrates the predictive distribution and improves NLL performance whilst leaving unchanged the MSE performance obtained from the original parameter estimates $\hat{\bm{\theta}} = (\hat{l},\hat{\sigma}_\xi^2,\hat{\sigma}_f^2)$. The lemma below is straightforward to prove (\autoref{calibration_phase_proof}):
\begin{lemma}[Calibration]
\label{calibration_lemma}
The parameters $\hat{\bm{\theta}}'$ output from \autoref {alg:gpnn_calibration} produce GPnn predictions that (a) achieve perfect (weak) calibration on $C$, (b) minimise NLL on $C$ over all choices of $\alpha$ and (c) produce the same MSE as $\hat{\bm{\theta}}$ does on any choice of test set.
\end{lemma}
\begin{remark}
     \hyperref[alg:gpnn_calibration]{Algorithm~\ref*{alg:gpnn_calibration}} can be applied to other GP methods, such as SVGP, to improve calibration.
\end{remark}
% \textbf{Remark 1} Algorithm 2 can be applied to other GP methods, such as SVGP, to improve calibration.
% \newline
% \newline
\autoref{tab:metrics_best_dist} uses $c = 1000$; a simple refinement would be to select $c$ automatically (e.g. using a bootstrap) with optional manual override. Where accurate uncertainty calibration is paramount practitioners could devote much larger CPU resources to this phase (which is also easily distributed); when no uncertainty measures are to be used this calibration step can be bypassed altogether.

\subsection{Efficient Nearest Neighbour Prediction}
\label{efficient_nn_prediction}
In order to implement GPnn prediction described in \ref{exact_NN_prediction} we use the scikit-learn NearestNeighbors package (\cite{scikit-learn}). This implements an efficient {\it approximate} nearest neighbour algorithm whereby one-time work is carried out to construct a table (at \(\bigO{dn\log n}\) (see e.g. \cite{Friedman1977, Omohundro1989}) and counted within the total {\it training} times quoted) which subsequent calibration/test predictions then make use of. Query compute-costs are described in the associated documentation, e.g. \(\bigO{d\log n}\) for the Ball-tree algorithm which the default automated algorithm selection in SciKit-Learn should at least match. In contrast, exact kNN costs are listed in the documentation as \(\bigO{dn}\). As is evident in \autoref{tab:train_times}, \hyperref[{fig:train}]{Figure~\ref*{fig:train}} and the quoted query and table-setup complexities, the nearest neighbour work increases with $\bm{x}$-dimension $d$. Alternative nearest neighbour algorithms and/or dimension-reduction techniques to help address this are yet to be investigated. We set the number of nearest neighbours to be $m = 400$ for all usages in this paper having observed minimal sensitivity to this parameter on independent synthetic datasets. Although a simple cross-validation procedure could be followed if tuning of $m$ is desired (at an increase in computational overhead), we wished to minimise such fine-tuning to emphasise the simplicity and robustness of the method we present, noting the strong performance we obtain despite this. At first glance, prediction complexity might appear restrictive, but some empirical tests on a laptop reveal comparable performance to SVGP prediction (\autoref{tab:pred_times}).

\section{Experimental Performance of GPnn Regression}
\label{performance}

\subsection{Performance on Real World Datasets}
\label{real_world_data}
\textbf {Implementational Details\footnote{\url{https://github.com/ant-stephenson/gpnn-experiments/}}:} Comparisons are made between our method and the state-of-the-art approaches of SVGP \cite{Hensman} and five distributed methods (\cite{hinton2002training,cao2014generalized,tresp2000bayesian,deisenroth2015distributed} and \cite{amalg_SOD} following the recommendation in \cite{Cohen20b}). We have chosen not to include other highly-performant approximations (e.g. structured kernel interpolation (SKI) methods and their extensions (\cite{Wilson2015,Yadav2021, Gardner2018a})), since, to our knowledge, none have supplanted these methods in the community as ubiquitous benchmarks on general datasets. Parameters for our method are given in \ref{parameter_estimation} and \ref{efficient_nn_prediction}. SVGP used 1024 inducing points; the distributed methods all used randomly selected subsets of sizes as close as possible to 625. The learning rate for the Adam optimiser was 0.01 for SVGP and 0.1 for our method and distributed methods. All runs in \autoref{tab:metrics_best_dist} used the the squared exponential (``RBF'') covariance function. A ``pre-whitening'' process (\ref{appx:prewhiten}) was applied to x values for all methods and the y values normalised (using training data-derived means and sds) to have mean zero and variance 1. More complete details are given in \ref{appx:more_implementational_details}. SVGP was run on a single Tesla V100 GPU with 16GB memory; all distributed methods run on eight Intel Xeon Platinum 8000 CPUs sharing 32GB of memory. Our method was run on a Macbook Pro with 2.4 GHz Intel core i5. See \ref{appx:real_world_data} for a full explanation of our selection and pre-processing of datasets which, apart from Protein, are taken from the UCI repository.

\textbf{Results}
 Runs were made on three randomly selected $7/9,2/9$ splits into training and test sets. \autoref{tab:metrics_best_dist} shows MSE and NLL results for our method alongside SVGP and distributed method (note: $n=$ {\it training set} size). 
The table shows only the best of the five distributed methods' results (w.r.t. MSE) but full results and details of all methods and all three performance measures are given in \ref{appx:distrib_var}. Complete calibration results are also plotted in \autoref{fig:results}.
With the exception of the Bike dataset our method is found to outperform all methods simultaneously for both MSE and NLL, and calibration likewise bar a narrow second place on the Song dataset. \autoref{tab:train_times} and \hyperref[{fig:train}]{Figure~\ref*{fig:train}} show that this is achieved whilst undercutting the training costs of the other methods, an effect that is very pronounced for large training sets (e.g. approximately $100\times$ faster than the other methods at $n= 1.6 \times 10^6$ on House Electric). \hyperref[{fig:train}]{Figure~\ref*{fig:train}} shows that a significant portion of time involves calibration; this can be parallelised (or eliminated if uncertainty is not required). Note also that larger timings observed for higher dimensional datasets are due to slower performance of the approximate nearest neighbour algorithm (\ref{efficient_nn_prediction}) in that regime, both for nn table construction and calibration. As discussed in \ref{efficient_nn_prediction}, future improvements may reduce this effect. It is very interesting that ``curse-of-dimensionality'' has not impacted on the method's MSE, NLL or calibration competitiveness at large $d$. This was despite the fact that a PCA analysis of the training $\bm{x}$ values showed no concentration within a low dimensional space (as to be expected given the prewhitening that has been applied (\autoref{appx:prewhiten}).

\begin{conjecture} 
\label{conj:curse}
Robustness to ``curse-of-dimensionality'' is at least partially explained by the increase in the intrinsic data-length-scale by a factor of order $\sqrt{d}$ that must arise in order for GP methods to be effective.
\end{conjecture}
The heuristic reasoning behind this conjecture is as follows: Unless length scale increases with $d$ the kernel gram matrix will exhibit an abundance of exceptionally small off-diagonal entries and hence be unable able to gain significant predictive power. A $\sqrt{d}$ increase serves to counterbalance this effect and seems consistent with length scales recovered from real data in practice.

\begin{adjustbox}{max width=\textwidth, caption={RMSE and NLL results (mean and standard deviation over 3 runs) for the best distributed method (w.r.t. MSE), SVGP and our method.}, 
label={tab:metrics_best_dist}, float=table}
% \begin{adjustbox}{max width=\textwidth, float=table, addcode={\caption{...}\label{tab:metric_best_dist}}{}}
\setlength{\tabcolsep}{3pt} % Adjust the column separation
\begin{tabular}{lllllllll}
\toprule
 &  &  & \multicolumn{3}{l}{\textbf{NLL}} & \multicolumn{3}{l}{\textbf{RMSE}} \\
 &  &  & Distributed & OURS & SVGP & Distributed & OURS & SVGP \\
Dataset & \(n\) & \(d\) &  &  &  &  &  &  \\
\midrule
Poletele & 4.6e+03 & 19 & 0.0091 ± 0.015 & \bfseries -0.214 ± 0.019 & -0.0667 ± 0.017 & 0.241 ± 0.0033 & \bfseries 0.195 ± 0.0042 & 0.226 ± 0.0059 \\
Bike & 1.4e+04 & 13 & 0.977 ± 0.0057 & 0.953 ± 0.013 & \bfseries 0.93 ± 0.0043 & 0.634 ± 0.004 & 0.624 ± 0.0079 & \bfseries 0.606 ± 0.0033 \\
Protein & 3.6e+04 & 9 & 1.11 ± 0.0051 & \bfseries 1.01 ± 0.0016 & 1.05 ± 0.0059 & 0.733 ± 0.0038 & \bfseries 0.666 ± 0.0014 & 0.688 ± 0.0043 \\
Ctslice & 4.2e+04 & 378 & -0.159 ± 0.052 & \bfseries -1.26 ± 0.01 & 0.467 ± 0.016 & 0.237 ± 0.012 & \bfseries 0.132 ± 0.00062 & 0.384 ± 0.0064 \\
Road3D & 3.4e+05 & 2 & 0.685 ± 0.0041 & \bfseries 0.371 ± 0.004 & 0.608 ± 0.018 & 0.478 ± 0.0023 & \bfseries 0.351 ± 0.0014 & 0.443 ± 0.008 \\
Song & 4.6e+05 & 90 & 1.32 ± 0.0012 & \bfseries 1.18 ± 0.0045 & 1.24 ± 0.0012 & 0.851 ± 6.7e-05 & \bfseries 0.787 ± 0.0045 & 0.834 ± 0.0011 \\
HouseE & 1.6e+06 & 8 & -1.34 ± 0.0013 & \bfseries -1.56 ± 0.0065 & -1.46 ± 0.0046 & 0.0626 ± 5.2e-05 & \bfseries 0.0506 ± 0.00072 & 0.0566 ± 0.00011 \\
\bottomrule
\end{tabular}
\end{adjustbox}

% not a very stable implementation:
\noindent\begin{table}[ht]
\setlength{\tabcolsep}{3pt} % Adjust the column separation
 \begin{minipage}[b]{0.6\linewidth}
  \centering
   \caption{Corresponding recorded training times (with mean and standard deviation from 3 runs) associated to the metrics in \autoref{tab:metrics_best_dist}, i.e. recorded at the same time and with the time given for the ``distributed'' method relating to the best performing model in terms of MSE. Mean times are rounded to 3 s.f. and standard deviation to 2.}
   \resizebox{\textwidth}{!}{%
    \begin{tabular}[b]{llllll}
    \toprule
     &  &  & \multicolumn{3}{l}{\textbf{Train time/s}} \\
     &  &  & Distributed & OURS & SVGP \\
    Dataset & \(n\) & \(d\) &  &  &  \\
    \midrule
    Poletele & 4.6e+03 & 19 & 17.1 ± 0.66 & 28.8 ± 0.22 & \bfseries 11.9 ± 0.081 \\
    Bike & 1.4e+04 & 13 & 43.5 ± 0.64 & \bfseries 28.4 ± 0.12 & 32.3 ± 0.15 \\
    Protein & 3.6e+04 & 9 & 98.9 ± 1.7 & \bfseries 27.7 ± 0.19 & 81.1 ± 1.1 \\
    Ctslice & 4.2e+04 & 378 & 86.9 ± 1.7 & \bfseries 76.1 ± 4.6 & 98.2 ± 1.8 \\
    Road3D & 3.4e+05 & 2 & 1200.0 ± 110.0 & \bfseries 27.9 ± 1.3 & 760.0 ± 8.0 \\
    Song & 4.6e+05 & 90 & 1050.0 ± 110.0 & \bfseries 138.0 ± 5.8 & 1080.0 ± 14.0 \\
    HouseE & 1.6e+06 & 8 & 3110.0 ± 250.0 & \bfseries 32.0 ± 0.34 & 3720.0 ± 17.0 \\
    \bottomrule
    \end{tabular}
  }
  \label{tab:train_times}
 \end{minipage}%
 % \hfill
 \begin{minipage}[b]{0.4\linewidth}
  \centering
  \includegraphics[width=55mm]{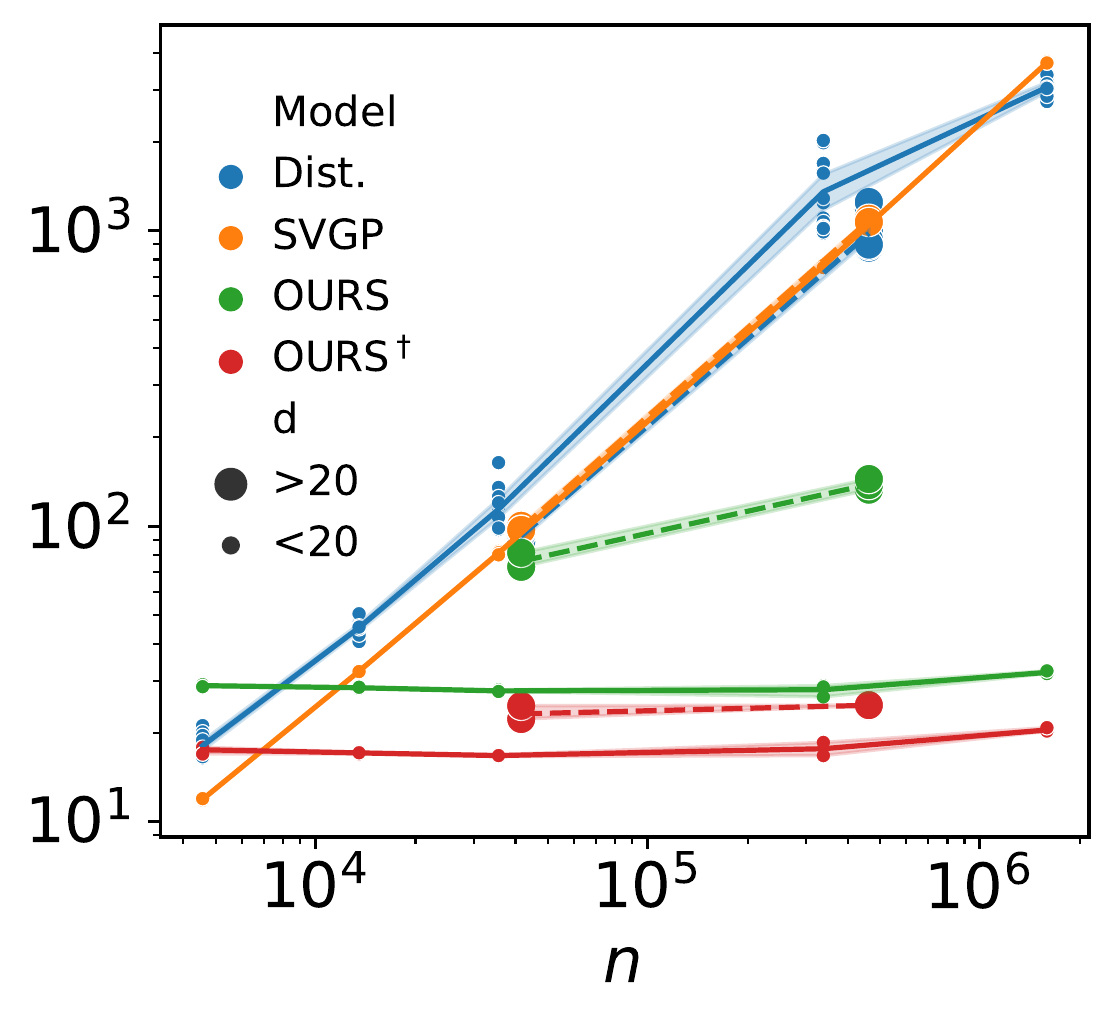}
  \captionof{figure}{Training times (s) for each model with ``high'' dimensional datasets highlighted.\textdagger: without calibration.}
  \label{fig:train}
 \end{minipage}%
\end{table}

\begin{figure}
  \centering
  \includegraphics[scale=0.32]{\detokenize{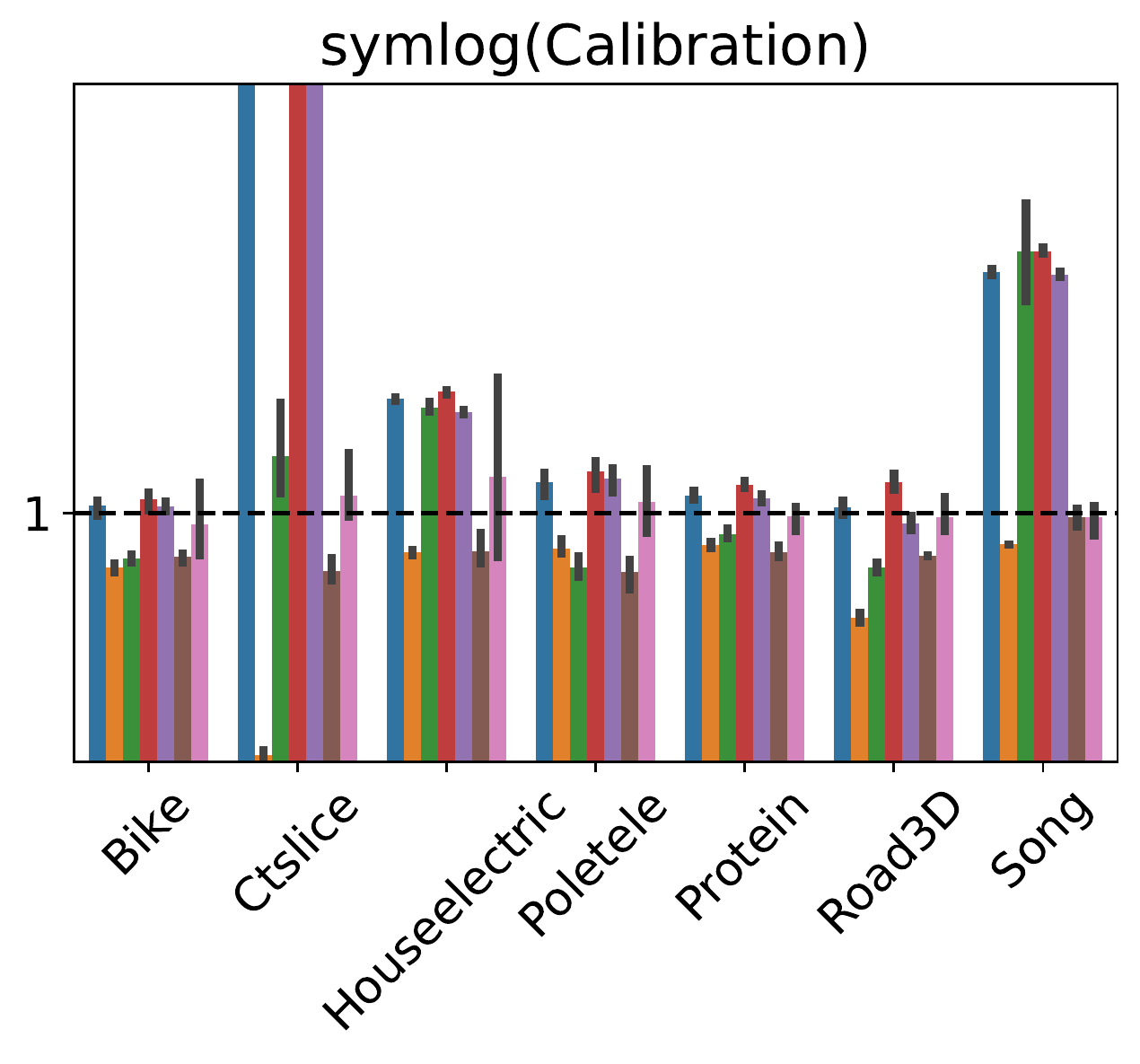}}
  \includegraphics[scale=0.32]{\detokenize{figures/NLL_vs_model_whitening_rescaled.pdf}}
  \includegraphics[scale=0.32]{\detokenize{figures/rMSE_vs_model_whitening.pdf}}
  \caption{Experiment results on a suite of UCI datasets. Optimal calibration performance is 1 (indicated by a black dashed line). Lower is better for NLL and RMSE. Y-axis truncated for readability for Calibration due to very large values on the ``Ctslice'' dataset. NLL is rescaled relative to the most extreme model performance. ``symlog'' refers to logarithmic axis rescaling applied to the \(y\)-axis on both positive and negative values (``symmetric'').}
  \label{fig:results}
\end{figure}

\subsection{Performance on Massive Synthetic Datasets}
\label{massive_datasets}
We generated size $5 \times 10 ^7$ datasets using the 15-variable deterministic Oakley and O'Hagan function \cite{Oakley2004,simulationlib} with i.i.d. variance-$\sigma_\xi^2$ additive noise sampled from a zero-mean Laplacian distribution (with much wider tails than $\mc{N}(0,\sigma_\xi^2))$. This function has 5 inputs contributing significantly to output variance, 5 with smaller effect, and 5 with almost no effect. These properties are poorly matched by the isotropic covariance functions being applied, resulting in gross misspecification of the assumed $\mc{GRF}$ model and the additive noise. \autoref{fig:synth_metrics} shows performance achieved with both the squared exponential (``RBF'') covariance function and the exponential (Matérn 1/2) covariance function. It is very interesting to note the improvement in convergence rate achieved by the exponential kernel. (see \autoref{rem: indep_l} for a potential explanation of why isotropic covariance functions are so effective at large $n$).

 \noindent\begin{figure}
  % \centering
  \includegraphics[scale=0.455]{\detokenize{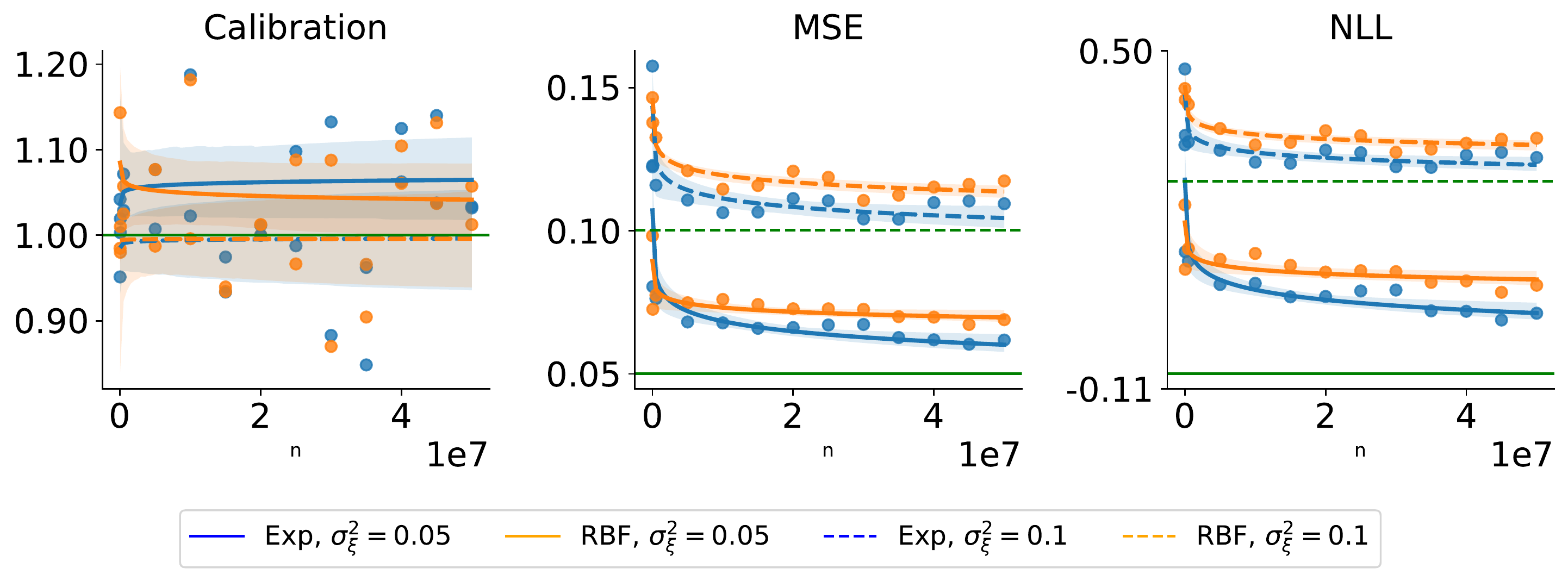}}
  \caption{Behaviour of performance metrics on the Oakley and O'Hagan function-derived dataset (\cite{Oakley2004,simulationlib}) as a function of data size, \(n\). The horizontal green line indicates the limiting behaviour if the predictive and generative were to match. Shaded regions indicate 95\% confidence intervals for the fitted curves.}
  \label{fig:synth_metrics}
\end{figure}

% \textbf{Remark 2:} 
\begin{remark}
    We checked to see whether this strong exponential kernel performance extended to UCI datasets. Surprisingly, given that it is not recommended for use in GP regression (e.g. \cite{GP_book} page 85),  it produced best RMSE performance across the board when compared with RBF and Matérn 3/2 kernels (\autoref{tab:best_kernels}), with Road3D RMSE reducing from 0.351 to 0.098. Exponential-kernel NLL performance was everywhere best apart from Ctslice and calibration was also better in most cases.
\end{remark}

\section{Discussion}

\textbf{Related work}: The basic ``subset-of-data'' approximation (\cite{eval_framework}) also achieves training efficiency by using a small portion of training data and can achieve surprisingly goods results (\cite{eval_framework}, \cite{exact_big_GP} section 5.1, \cite{Graphon_SoD}). But it typically would need a much greater proportion of training data than we are using for large $n$ due to its failure to leverage the power of large training sets for prediction; this explains why it is not consistently competitive with other methods. Passing references to the decoupling of prediction and estimation have been made, e.g. \cite{Stein2004,Bachoc2022,eval_framework}, but not shown to be as consistently powerful as we have found, nor justified in terms of robustness theory or explored as a mainstream approach. Various works (primarily from the geospatial community) make use of nearest neighbour (NN) techniques for GPs (e.g. \cite{vecchia_estimation_1988,Stein2002,Stein2004,Datta2016a,Datta2016,Finley2019,Tran2020,Wu2022}). Vecchia (\cite{vecchia_estimation_1988}) uses NNs to approximate likelihoods for parameter estimation, whilst Stein (\cite{Stein2004}) adapts this work to REML using more distant points as well as NNs, again for parameter estimation purposes. In contrast, our focus is on using NNs for {\it prediction}, and whilst we have found passing references to its explicit use for this purpose (e.g. \cite{STEIN20141}) we have found little or no discussion of effectiveness in comparison to other methods on large datasets and no detailed accompanying analysis of its robustness properties and how these can achieve very high efficiency at scale. \cite{Datta2016} gives a construction of a hierarchical fully Bayesian model (`NNGP') derived using collections of NN sets. This approach adds considerable computational overhead and code-complexity, e.g. use of Gibbs sampling. Whilst fully Bayesian treatment of hyperparameters is explored in the ML literature, e.g. \cite{lalchand_approximate_2020}, it has not been adopted by the ML community for use at scale due to its high computational cost relative to empirical Bayes methods (\cite{lalchand_approximate_2020}, section 5). Bearing this in mind, we consider the extension given in \cite{Finley2019} for improved scalability (Algorithm 5 - `conjugate NNGP') to be more relevant.
%
% The benefit of this is that a version of the fully probabilistic framework of GPs can be used, but at the expense of additional computational overhead when compared to our shortcut to a predictive mechanism. This work is modified in various ways in \cite{Finley2019} including by adaption to an empirical Bayes setting for further scalability (Algorithm 5 - 'conjugate NNGP'). 
%
In this hybrid method some hyperparameters are recovered as `empirical-Bayes' point-estimates via grid-based search and the remainder treated in a fully Bayesian fashion using a conjugate prior with some choice of hyper-hyperparameters. This results in a Student-\(t\) predictive distribution, rather than Gaussian, but with equal first moment to ours and variance differing only by a (hyper-hyperparameter dependent) multiplicative factor; an effect that our recalibration step would render redundant (see \ref{appx:nngp}). 
Recent work (\cite{Tran2020,Wu2022}) extends local geospatial GP methods into sparse variational ML applications. `VNNGP' is shown to be competitive with other methods in \cite{Wu2022} despite adding further approximations into the model. We note that when using all observations as the inducing points their predictive mechanism matches ours, up to choice of parameter estimation.
We believe these pre-existing works, which differ significantly in approach and perspective, complement our own, which emphasizes the benefits of decoupling parameter estimation from prediction, the robustness properties that can be achieved at large scales, the efficacy of a simple recalibration step and can be run at high scale with a simple algorithm on an off-the-shelf laptop.

 % In addition to this, Stein discusses the `screening effect in kriging' (\cite{Stein2002,Stein2011}) where he argues that nearby points can give asymptotically optimal predictions under misspecified spectral density on observations on an infinite lattice, under certain conditions.

\textbf{Limitations and Future Research:} Our results exhibit a leap in speed and performance for GP regression at scale, but there remains more to be done to fully explain and extend performance (as evidenced by our remarks and conjectures). This is particularly so for high dimensional problems where (a) a faster nearest neighbour algorithm would have a particularly big pay-off and (b) there is a need to explain why ``curse-of-dimensionality'' appears not to have damaged the method's competitiveness (see \autoref{conj:curse}). Extensions of theory to broader aspects of GP robustness, rates of convergence and ``strong calibration'' (\autoref{predictive_measures}) are current areas of some the authors' ongoing work.

\section*{Acknowledgments}
We would like to thank IBM Research and EPSRC for supplying iCase funding for Anthony Stephenson and the UK National Cyber Security Centre for contributing toward Robert Allison's funding. We also thank the anonymous reviewers of this paper for their constructive comments and suggestions.

\bibliographystyle{abbrv}
\bibliography{cited_refs}
\nocite{*}
% \printbibliography

\endgroup

\begingroup
\let\clearpage\relax
%auto-ignore
% \documentclass[main.tex]{subfiles}
% \begin{document}

% \title{Leveraging Locality and Robustness to Achieve Massively Scalable Gaussian Process Regression }
% \author{}

\appendix

\clrpg
\section{Theoretical GPnn Results}
\label{detailed_proofs}

\label{appx:gpnn}
\subsection{Preliminary results}
    Let \(\rho(\bm{x},\bm{x'})=\sigma_f\sqrt{1-c(\bm{x}/l,\bm{x'}/l)}\) be the kernel-induced distance function over \(\mathbb{R}^d\) (\cite{Scholkopf2001}).
    We define \(\mathbf{x}_{(j,n)}(\bm{x}^*)\) as the \(j^{th}\) nearest neighbour random variable to a test point \(\bm{x}^*\) under \(\rho\), which we abbreviate to \(\mathbf{x}_{(j)}\) when the context is clear, and \(\bm{x}_{(j)}(\bm{x}^*) \in N_m(\bm{x}^*)\) as the realised \(j^{th}\)
    nearest neighbour of the test point \(\bm{x}^*\) from a training set \(X\). From this we define \(\epsilon_i = \rho^2(\mathbf{x}_{(i)},\bm{x}^*)\) and \(\epsilon_{ij} = \rho^2(\mathbf{x}_{(i)},\mathbf{x}_{(j)})\).

\begin{definition}[Support]\label{def:support}
    Let \(P_\mathbf{x}\) be the probability measure of \(\mathbf{x}\) and \(S^\rho_{\bm{x},\epsilon}\) the closed ball of radius \(\epsilon>0\) under the metric \(\rho\) centred at \(\bm{x}\). Then we define \(\support(P_\mathbf{x}) = \{\bm{x} : P_\mathbf{x}(S^\rho_{\bm{x},\epsilon})>0\, \forall\,\epsilon >0\}\).
\end{definition}

\begin{definition}[Weakly-faithful]\label{def:weakly_faithful}
    We define a pair of metrics \(\rho(\cdot,\cdot),\hat{\rho}(\cdot,\cdot)\) to be \emph{weakly-faithful} w.r.t. each other if the following condition holds: The \(m^{th}\) nearest neighbour under \(\hat{\rho}\) converges to the test point as \(n\rightarrow\infty\) if and only if the \(m^{th}\) nearest neighbour under \(\rho\) converges to the test point in the limit.
\end{definition}

\paragraph{Assumptions}
\begin{description}[labelindent=1cm]
\label{desc:assumptions_gpnn}
    \item[(A1)\label{ass:X}] \(\mathbf{x}\overset{\textrm{iid}}{\sim}P_\mathbf{x}\) and \(\mathbf{x}^* \in \support(P_\mathbf{x})\) under the generative metric defined by \(c(\cdot,\cdot)\).
    % \item[(A2)\label{ass:gpnn_k}] \(c(\cdot,\cdot),\hat{c}(\cdot,\cdot)\) are stationary kernels whose induced distance metrics are both \emph{weakly faithful} (\autoref{def:weakly_faithful}) w.r.t. the Euclidean metric. 
    \item[(A2)\label{ass:gpnn_k}] \(c(\cdot,\cdot),\hat{c}(\cdot,\cdot)\) are stationary kernels whose induced distance functions are \emph{weakly faithful} metrics (\autoref{def:weakly_faithful}). 
    \item[(A3)\label{ass:gpnn_y}]  \(y_i=f(\bm{x}_i) + \xi_i\) with \(\xi_i\overset{\textrm{iid}}{\sim}P_\xi\), \(f(\bm{x})\sim \mc{WSRF}(\sigma_f^2 c(./l,./l))\) and \(y_i\given f(\bm{x}_i) \sim P_\xi\) and $\E[\xi]=0$, $\E[\xi^2]=\sigma_\xi^2$.
\end{description}

\textbf{Note:} Assumption \hyperref[{desc:assumptions_gpnn}]{(A2)} is not overly restrictive and encompasses commonly used kernels such as all those mentioned in this paper.

\begin{lemma}
    \label{lem:eps_limit}
    \(\epsilon_i\rightarrow0\) and \(\epsilon_{ij}\rightarrow0\) as \(n\rightarrow\infty\) a.e. with respect to the measure over \(\mathbf{x}\in\mathbb{R}^d\), \(P_\mathbf{x}\), for \(i,j\leq m\), \(\frac{m}{n}\rightarrow0\) and under \hyperref[{desc:assumptions_gpnn}]{(A1-2)}.
\end{lemma}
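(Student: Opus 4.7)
The plan is a standard nearest-neighbour convergence argument in the metric $\rho$: use the positive probability mass near $\bm{x}^*$, guaranteed by $\bm{x}^*\in\support(P_\mathbf{x})$, to force the $m$-th nearest-neighbour ball to shrink to $\bm{x}^*$, and then bound $\epsilon_{ij}$ by the triangle inequality.

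First I would fix $\epsilon>0$. By \hyperref[{desc:assumptions_gpnn}]{(A1)} the test point satisfies $\bm{x}^*\in\support(P_\mathbf{x})$, so $p_\epsilon := P_\mathbf{x}(S^\rho_{\bm{x}^*,\epsilon})>0$. The count $N_n$ of training points inside $S^\rho_{\bm{x}^*,\epsilon}$ is then $\mathrm{Bin}(n,p_\epsilon)$, and the strong law of large numbers yields $N_n/n\to p_\epsilon$ almost surely. Because $m/n\to 0$, this forces $N_n\geq m$ eventually a.s., and so $\mathbf{x}_{(m,n)}\in S^\rho_{\bm{x}^*,\epsilon}$ for all sufficiently large $n$, almost surely.

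Next I would upgrade ``a.s.\ for each fixed $\epsilon$'' to ``a.s., simultaneously for all $\epsilon$'' by fixing a countable sequence $\epsilon_k\downarrow 0$ and intersecting the corresponding probability-one events. This gives $\rho(\mathbf{x}_{(m,n)},\bm{x}^*)\to 0$ a.s., i.e.\ $\epsilon_m\to 0$, and by the ordering of nearest neighbours $\epsilon_i\leq\epsilon_m$ for $i\leq m$, which settles the first claim. Assumption \hyperref[{desc:assumptions_gpnn}]{(A2)} enters at two places: it legitimises the use of $\rho$ as a genuine metric (so the triangle inequality is available), and via \autoref{def:weakly_faithful} it ensures that if the neighbours were instead selected under a surrogate $\hat\rho$, the same convergence would follow.

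For the pairwise distances, the triangle inequality for $\rho$ yields $\sqrt{\epsilon_{ij}}=\rho(\mathbf{x}_{(i)},\mathbf{x}_{(j)})\leq\rho(\mathbf{x}_{(i)},\bm{x}^*)+\rho(\mathbf{x}_{(j)},\bm{x}^*)=\sqrt{\epsilon_i}+\sqrt{\epsilon_j}\to 0$, so $\epsilon_{ij}\to 0$ a.s. The main subtlety is the measure-theoretic bookkeeping in the second paragraph: the SLLN produces one null set per radius $\epsilon$, and the countable-radius reduction is what allows exchanging the ``for all $\epsilon$'' and the ``almost surely'' quantifiers. Everything else---the binomial counting, the SLLN, the ordering monotonicity, and the triangle inequality---is mechanical.
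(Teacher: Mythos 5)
Your proposal is correct and takes essentially the same route as the paper: the paper simply cites Lemma 6.1 of Gy\"orfi et al.\ (whose proof is precisely your binomial-count/SLLN/countable-radii argument, transplanted to the kernel-induced metric \(\rho\) via the support definition) and then concludes with the identical triangle-inequality step for \(\epsilon_{ij}\). The only difference is that you prove the \(m\)-th nearest-neighbour convergence from scratch rather than invoking the reference.
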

\begin{proof}
    Lemma 6.1 of \cite{Gyorfi2010} states that \(\norm{\mathbf{x}_{(m,n)}(\bm{x})-\bm{x}}\xrightarrow{n\rightarrow\infty}0\) with probability one (with respect to \(P_\mathbf{x}\)). Their proof can be generalised immediately to state that \(\rho(\mathbf{x}_{(m,n)}(\bm{x}), \bm{x})\xrightarrow{n\rightarrow\infty}0\) by using our definition of support, \ref{def:support}, that directly invokes the metric \(\rho\). Hence \(\epsilon_i\rightarrow 0\) for all \(i\leq m\) (since \(\bm{x}^*\) is in \(\support(P_\mathbf{x})\)). Since \(\rho\) is a metric it satisfies the triangle inequality; hence \(\rho(\mathbf{x}_{(i)},\mathbf{x}_{(j)}) \leq \rho(\mathbf{x}_{(i)},\bm{x}^*) + \rho(\mathbf{x}_{(j)}, \bm{x}^*) \xrightarrow{n\rightarrow\infty}0\) for all \(i,j \leq m\).
\end{proof}

\begin{lemma}
    \label{lem:gpnn_limit}
    For an \(m\)-GPnn under the assumptions \hyperref[{desc:assumptions_gpnn}]{(A1-3)},
    \[\lim_{n\rightarrow\infty} \kstar_N^TK_N^{-1}\kstar_N = \sigma_f^2-\sigma_\xi^2m^{-1} + \bigO{m^{-2}}.\]
\end{lemma}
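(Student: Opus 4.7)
The plan is to identify the $n\to\infty$ pointwise limits of $K_N$ and $\kstar_N$, invert the resulting limiting matrix in closed form via Sherman--Morrison, and then expand the scalar $\kstar_N^TK_N^{-1}\kstar_N$ in powers of $m^{-1}$.

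First I would convert the conclusion of Lemma \ref{lem:eps_limit} into kernel limits. Since $\rho^2(\bm x,\bm x')=\sigma_f^2\bigl(1-c(\bm x/l,\bm x'/l)\bigr)$, the statements $\epsilon_i\to 0$ and $\epsilon_{ij}\to 0$ a.s.\ are equivalent to $c(\bm x_{(i)},\bm x^*)\to 1$ and $c(\bm x_{(i)},\bm x_{(j)})\to 1$ for every $i,j\le m$; weak faithfulness in (A2) guarantees this conclusion irrespective of whether neighbours are chosen under $\rho$ or $\hat\rho$. Including the $\sigma_\xi^2$ contribution to the diagonal from (A3), this yields the a.s.\ entrywise limits
\[
K_N \longrightarrow K_\infty := \sigma_f^2\,\mathbf{1}\mathbf{1}^T + \sigma_\xi^2 I_m, \qquad \kstar_N \longrightarrow \sigma_f^2\,\mathbf{1}.
\]

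Next I would push these limits through the inverse and the quadratic form. Because $K_\infty$ has eigenvalues $\sigma_\xi^2$ (multiplicity $m-1$) and $\sigma_\xi^2+m\sigma_f^2$, it is strictly positive definite, so matrix inversion is continuous at $K_\infty$ and $K_N^{-1}\to K_\infty^{-1}$ a.s. Applying Sherman--Morrison to $K_\infty=\sigma_\xi^2 I_m+(\sigma_f\mathbf{1})(\sigma_f\mathbf{1})^T$ gives
\[
K_\infty^{-1} \;=\; \sigma_\xi^{-2} I_m \;-\; \frac{\sigma_f^2}{\sigma_\xi^2(\sigma_\xi^2+m\sigma_f^2)}\,\mathbf{1}\mathbf{1}^T,
\]
and substituting $\kstar_\infty=\sigma_f^2\mathbf{1}$ reduces the limit to
\[
\lim_{n\to\infty}\kstar_N^T K_N^{-1}\kstar_N \;=\; \frac{m\,\sigma_f^4}{\sigma_\xi^2+m\sigma_f^2} \;=\; \sigma_f^2\Bigl(1+\tfrac{\sigma_\xi^2}{m\sigma_f^2}\Bigr)^{-1}.
\]
A geometric-series expansion of the right-hand side then yields $\sigma_f^2-\sigma_\xi^2 m^{-1}+\bigO{m^{-2}}$, as claimed.

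The only non-routine step is the middle one: justifying that entrywise a.s.\ convergence of $K_N$ transfers to $K_N^{-1}$ and hence to the full bilinear form. This is handled by noting that $K_\infty$ has smallest eigenvalue $\sigma_\xi^2>0$, so the inverse is a rational function of the entries with non-vanishing denominator and is therefore continuous at $K_\infty$; everything else is an algebraic identity followed by a Taylor expansion in $m^{-1}$.
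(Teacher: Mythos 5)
Your proposal is correct and follows essentially the same route as the paper: the kernel limits $\kstar_N\to\sigma_f^2\bm{1}$ and $K_N\to\sigma_\xi^2 I+\sigma_f^2\bm{11}^T$ from \autoref{lem:eps_limit}, a Sherman--Morrison inversion justified by continuity of the inverse, and an expansion in $m^{-1}$. The only cosmetic differences are that you evaluate the quadratic form in closed form as $m\sigma_f^4/(\sigma_\xi^2+m\sigma_f^2)$ before expanding (the paper expands $\bm{1}^T(K^\infty)^{-1}\bm{1}$ directly) and that you make the positive-definiteness justification for continuity explicit.
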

\begin{proof}
    From \autoref{lem:eps_limit} we have that 
    \(\lim_{n\rightarrow\infty}k(\mathbf{x}_{(j)}(\bm{x}^*),\bm{x}^*) =
    \lim_{n\rightarrow\infty} (\sigma_f^2 - \epsilon_i) = \sigma_f^2\) and \(\lim_{n\rightarrow\infty} k(\mathbf{x}_{(i)}(\bm{x}^*), \mathbf{x}_{(j)}(\bm{x}^*)) = \lim_{n\rightarrow\infty}(\sigma_f^2 - \epsilon_{ij}) = \sigma_f^2\). 
    As a result, \(\kstar_N\rightarrow \sigma_f^2 \bm{1}\) and 
\begin{align}
    K^\infty:= \lim_{n\rightarrow\infty}K_N & = \sigma_\xi^2 I + \sigma_f^2\bm{11}^T.
\end{align}
Now using Sherman-Morrison and the continuity of matrix inverse and matrix-matrix products:
\begin{align}
    (A + \bm{bc}^T)^{-1} &= A^{-1} - \frac{A^{-1}\bm{bc}^TA^{-1}}{1+\bm{c}^TA^{-1}\bm{b}} \label{eq:sherman-morrison}\\
    (K^\infty)^{-1} = (\sigma_\xi^2I + \sigma_f^2 \bm{11}^T)^{-1} &= \frac{1}{\sigma_\xi^2}\left(I -
    \sigma_f^2\frac{\bm{11}^T}{\sigma_\xi^2 + \sigma_f^2\bm{1}^T\bm{1}}\right) \label{eq:lim_Kinv}\\
    \bm{1}^T(K^\infty)^{-1}\bm{1} &= \frac{m}{\sigma_\xi^2}\left(1 -
    \frac{m\sigma_f^2}{\sigma_\xi^2 + m\sigma_f^2}\right) \nonumber\\
    &= \frac{m}{\sigma_\xi^2}\left(1-m\sigma_f^2\frac{1}{m\sigma_f^2}\left(1-\frac{\sigma_\xi^2}{m\sigma_f^2}
    +\frac{\sigma_\xi^4}{m^2\sigma_f^4}-\bigO{m^{-3}}\right)\right) \nonumber\\
    &= \frac{1}{\sigma_f^2} -\frac{\sigma_\xi^2}{m\sigma_f^4} + \bigO{m^{-2}} \label{eq:Kinf_sum}. 
\end{align}
Thus,
\begin{align}
    \lim_{n\rightarrow\infty}\kstar_N^TK_N^{-1}\kstar_N &=
    \sigma_f^4\bm{1}^T(K^\infty)^{-1}\bm{1} = \sigma_f^2-\sigma_\xi^2m^{-1} + \bigO{m^{-2}}.
\end{align}
\end{proof}

\begin{lemma}[\(\mc{WSRF}\) expectations]\label{lem:wsrf_exp}
    Under \hyperref[ass:gpnn_y]{(A3)}, \(\E_{\mathbf{y},\mathrm{y^*}}\{\bm{y}y^*\} = \kstar\) and \(\E_{\mathbf{y}}\{\bm{yy}^T\} = K\).
\end{lemma}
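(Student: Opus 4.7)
The plan is to expand each expectation using the model in assumption (A3), namely $y_i = f(\bm{x}_i) + \xi_i$, and then separate the four resulting cross-terms via independence of $f$ and $\xi$ and the iid property of the noise. The key ingredients are only (i) the defining second-moment structure of a weakly stationary random field, which gives $\E[f(\bm{x})f(\bm{x}')] = \sigma_f^2 c(\bm{x}/l, \bm{x}'/l) = k(\bm{x},\bm{x}') - \sigma_\xi^2\mathbb{1}_{\bm{x}=\bm{x}'}$ appropriately, and a zero mean; (ii) $\E[\xi]=0$, $\E[\xi^2]=\sigma_\xi^2$, and independence of the $\xi_i$ across indices; (iii) independence of the noise process from $f$.

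For the first identity, I would write $y^* = f(\bm{x}^*) + \xi^*$ and expand
\begin{equation*}
    \E[y_i\, y^*] = \E[f(\bm{x}_i)f(\bm{x}^*)] + \E[f(\bm{x}_i)\xi^*] + \E[\xi_i f(\bm{x}^*)] + \E[\xi_i \xi^*].
\end{equation*}
Since the test noise $\xi^*$ is independent of everything else and has mean zero, and similarly for $\xi_i$ (also independent of $\xi^*$ as it belongs to a different sample), the last three terms vanish. The remaining term is the $\mc{WSRF}$ covariance $\sigma_f^2 c(\bm{x}_i/l, \bm{x}^*/l) = k(\bm{x}_i,\bm{x}^*)$, which is the $i^{th}$ entry of $\kstar$.

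For the second identity, expanding $\E[y_i y_j] = \E[f(\bm{x}_i)f(\bm{x}_j)] + \E[\xi_i\xi_j]$ (the cross terms again vanishing by independence and zero mean) gives $\sigma_f^2 c(\bm{x}_i/l,\bm{x}_j/l) + \sigma_\xi^2 \delta_{ij}$, which is precisely $K_{ij}$. Stacking these entries yields the matrix equality.

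I do not anticipate any genuine obstacle: the statement is essentially a bookkeeping exercise that formalises what is meant by the noisy-GP generative model under (A3). The only subtle point worth flagging explicitly is that (A3) implicitly asserts the mutual independence of the latent process $f$ and the noise sequence $\{\xi_i\}$ (and of the test noise from the training noise); I would state this up front so that the vanishing of the cross terms is justified rigorously.
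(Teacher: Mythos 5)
Your proposal is correct and follows essentially the same route as the paper: the paper's proof is a two-line appeal to the covariance structure of the \(\mc{WSRF}\) together with the independence and zero mean of the additive noise, which is exactly the cross-term expansion you carry out explicitly. Your remark that (A3) implicitly requires independence of \(f\) from the noise sequence (and of test noise from training noise) is a fair point of rigour, but it does not change the argument.
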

\begin{proof}
    By assumption on the covariance properties of \(y\) and the independence and zero-mean of the additive noise, \(\E_\mathbf{y}\{y_iy_j\}=k(\bm{x}_i,\bm{x}_j)\). Extending this to the joint distribution over \(\bm{y},y^*\) is straightforward and gives the results stated.
\end{proof}
\autoref{lem:wsrf_exp} is subsequently assumed to be in use throughout \ref{sec:limit_proofs}.

\subsection{Limit proofs}
\label{sec:limit_proofs}
In the following statements only misspecification of type (d) and/or (e) (\autoref{misspec}) is considered to be at work.

% Additionally, we note that for functions of the test observation only (\(f(y^*)\)), when we take expectations over \(\mathbf{y}\), they are equivalent to expectations over \(\mathbf{y}_N\) so that \(\Ey{f(y^*)}=\E_\mathbf{y}\{\E_{y^*\given\mathbf{y}}\{f(y^*)\}\}=\E_{\mathbf{y}_N}\{\E_{y^*\given\mathbf{y}_N}\{f(y^*)\}\}\). We justify this claim by the fact that the function \(y(\cdot)\) is distributed according to a stochastic process, by assumption, and hence by Kolmogorov's consistency theorem (see e.g. \cite{athreya2006measure}) this implies that any finite set of marginal observations satisfies the corresponding consistency conditions such that the above is true.

\begin{lemma}[MSE limit]
    \label{lem:mse_limit}
    Under the assumptions \hyperref[{desc:assumptions_gpnn}]{(A1-3)}, for fixed \(m<\infty\), the predictive GPnn given in \autoref{exact_NN_prediction} converges pointwise in the sense of MSE wrt \(P_\mathbf{x}\)-a.e. as
    \[\lim_{n\rightarrow\infty}f_n^{\MSE}(\bm{\theta}) = \sigma_\xi^2(1+m^{-1}) - \bigO{m^{-2}}.\]
\end{lemma}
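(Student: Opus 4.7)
The plan is to reduce the predictive MSE to a closed-form expression involving $\kstar_N^T K_N^{-1}\kstar_N$, at which point the result drops out of \autoref{lem:gpnn_limit}.

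First I would expand
\[
f_n^{\MSE}(\bm{\theta}) = \E_{\bm{y},y^*}\!\bigl[(y^* - \kstar_N^T K_N^{-1}\bm{y})^2\bigr]
\]
into the three terms $\E[(y^*)^2]$, $-2\,\E[y^*\bm{y}^T]K_N^{-1}\kstar_N$ and $\kstar_N^T K_N^{-1}\E[\bm{y}\bm{y}^T]K_N^{-1}\kstar_N$, evaluating each with \autoref{lem:wsrf_exp} together with the independence of $\xi^*$ from $\{\xi_i\}$ granted by \hyperref[ass:gpnn_y]{(A3)}. This gives $\E[(y^*)^2] = \sigma_f^2 + \sigma_\xi^2$, $\E[y^*\bm{y}] = \kstar_N$ (the noise cross-term vanishes), and $\E[\bm{y}\bm{y}^T] = K_N$, so the quadratic form collapses via $K_N K_N^{-1} = I$, yielding the identity
\[
f_n^{\MSE}(\bm{\theta}) = \sigma_f^2 + \sigma_\xi^2 - \kstar_N^T K_N^{-1}\kstar_N.
\]

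Second, I would take $n\to\infty$ and quote \autoref{lem:gpnn_limit}. Since the identity above holds pathwise in the random training set $X$ (the expectations run only over $\bm{y},y^*$), the $P_\mathbf{x}$-a.e.\ limit can be applied directly with no exchange-of-limits to justify, giving
\[
\lim_{n\to\infty} f_n^{\MSE}(\bm{\theta}) = \sigma_f^2 + \sigma_\xi^2 - \bigl(\sigma_f^2 - \sigma_\xi^2 m^{-1} + \bigO{m^{-2}}\bigr) = \sigma_\xi^2(1+m^{-1}) - \bigO{m^{-2}}.
\]

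The argument is essentially bookkeeping; all the analytic content has already been carried out in \autoref{lem:eps_limit} and \autoref{lem:gpnn_limit}. The only place to be careful is distinguishing $\E[\bm{y}\bm{y}^T] = K_N$ (with noise) from $\E[y^*\bm{y}] = \kstar_N$ (without cross-noise), since this asymmetry is precisely what produces the $\sigma_f^2 + \sigma_\xi^2$ constant after the quadratic-form cancellation. I do not anticipate any serious obstacle.
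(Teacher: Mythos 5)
Your proposal is correct and follows essentially the same route as the paper: the same three-term expansion of the MSE using \autoref{lem:wsrf_exp} (with the noise/no-noise asymmetry between \(\E[\bm{y}\bm{y}^T]=K_N\) and \(\E[y^*\bm{y}]=\kstar_N\)), reducing to \(\sigma_f^2+\sigma_\xi^2-\kstar_N^TK_N^{-1}\kstar_N\), followed by \autoref{lem:gpnn_limit}. No gaps.
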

\begin{proof}\label{proof:mse_limit}
    This follows from \autoref{lem:gpnn_limit} by expanding the definition
    of \(\MSE\):
    \begin{align*}
        \lim_{n\rightarrow\infty}f_n^{\MSE}(\bm{\theta})
        &=\lim_{n\rightarrow\infty}\Ey{\abs{y^* - \mu_N^*}^2} \\
        % &= \lim_{n\rightarrow\infty}\E_\mathbf{y}\E_\mathrm{y^*}\left[{y^*}^2 +
        % (\mu_N^*)^2 - 2y^*\mu_N^*\given \mathbf{y}\right]\\
        % &= \lim_{n\rightarrow\infty}\left[\E_\mathrm{y^*}\{{y^*}^2\} + \E_\mathbf{y}\{{\mu^*_N}^2\} - 2\E_\mathbf{y}\{\E_{\mathrm{y^*}\given\mathbf{y}}\{y^*\mu^*_N\}\}\right]\\
        &= \lim_{n\rightarrow\infty}\left[\E_\mathrm{y^*}\{{y^*}^2\} + \E_\mathbf{y}\{{\mu^*_N}^2\} - 2\E_{\mathbf{y},\mathrm{y^*}}\{\kstar_N^TK_N^{-1}\bm{y}_Ny^*\}\right]\\
        &= \sigma_f^2 + \sigma_\xi^2 -
        \lim_{n\rightarrow\infty}\E_\mathbf{y}\{{\mu^*_N}^2\} \\
        &= \sigma_\xi^2(1 +m^{-1}) - \bigO{m^{-2}}.
    \end{align*}
    Since \(\E_\mathbf{y}\{{\mu^*_N}^2\}=\E_\mathbf{y}\{\kstar_N^TK_N^{-1}\bm{y}_N\bm{y}_N^TK_N^{-1}\kstar_N\}=\kstar_N^TK_N^{-1}\kstar_N\), and by assumption \(\E_{\mathbf{y},\mathrm{y^*}}\{\bm{y}_Ny^*\}=\kstar_N\), even under a \(\mc{WSRF}\) generative model (\autoref{lem:wsrf_exp}).
\end{proof}

\begin{corollary}[NLL limit]
\label{cor:nll_limit}
    \[
        \lim_{n\rightarrow\infty}f_n^{\NLL}(\bm{\theta}) = \half\log(\sigma_\xi^2(1+m^{-1})) +\half + \half\log2\pi - \bigO{m^{-2}}.
    \]
\end{corollary}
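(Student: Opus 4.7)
The plan is to reduce $f_n^{\NLL}(\bm{\theta})$ to expressions whose large-$n$ behaviour is already pinned down by \autoref{lem:gpnn_limit} and \autoref{lem:mse_limit}. Expanding the single-point Gaussian predictive NLL gives
\[
    f_n^{\NLL}(\bm{\theta}) = \E_{\mathbf{y},\mathrm{y^*}}\Bigl\{\tfrac{1}{2}\log(2\pi\sigma_N^{*2}) + \tfrac{(y^* - \mu_N^*)^2}{2\sigma_N^{*2}}\Bigr\},
\]
where $\sigma_N^{*2} = \sigma_f^2 + \sigma_\xi^2 - \kstar_N^T K_N^{-1}\kstar_N$ is the GPnn predictive variance for a noisy observation. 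Since $\sigma_N^{*2}$ is deterministic with respect to $(\mathbf{y}, y^*)$, it factors out of the expectation, yielding the compact decomposition $f_n^{\NLL}(\bm{\theta}) = \tfrac{1}{2}\log(2\pi\sigma_N^{*2}) + f_n^{\MSE}(\bm{\theta})/(2\sigma_N^{*2})$.

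Next I would read off both limits from the earlier lemmas. \autoref{lem:gpnn_limit} yields
\[
    \lim_{n\to\infty}\sigma_N^{*2} = \sigma_f^2 + \sigma_\xi^2 - \bigl(\sigma_f^2 - \sigma_\xi^2 m^{-1} + \bigO{m^{-2}}\bigr) = \sigma_\xi^2(1 + m^{-1}) - \bigO{m^{-2}},
\]
which coincides with the MSE limit from \autoref{lem:mse_limit}. Consequently the quadratic data-fit term collapses to $\tfrac{1}{2}$ up to order $m^{-2}$, and the log-variance term expands as $\tfrac{1}{2}\log 2\pi + \tfrac{1}{2}\log(\sigma_\xi^2(1+m^{-1})) - \bigO{m^{-2}}$ using $\log(a - \bigO{m^{-2}}) = \log a - \bigO{m^{-2}}$, which is legitimate because $a = \sigma_\xi^2(1+m^{-1})$ is strictly positive and bounded away from zero for any fixed finite $m$. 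Summing the two contributions gives the stated identity.

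The only subtlety, and it is a mild one, is consistent bookkeeping of the $\bigO{m^{-2}}$ remainders through the ratio and the logarithm. Since both the MSE and the predictive variance originate from the same Sherman--Morrison expansion of $\kstar_N^T K_N^{-1}\kstar_N$ recorded in \autoref{lem:gpnn_limit}, the two error terms cancel in the ratio and produce a single additive $\bigO{m^{-2}}$ correction in the log-determinant; the argument therefore reduces to elementary Taylor estimates and requires no fresh probabilistic input beyond what has already been established.
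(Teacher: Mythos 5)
Your proposal is correct and follows essentially the same route as the paper: decompose the expected NLL into the log-variance term plus the calibration (quadratic) term, factor out the deterministic predictive variance, and invoke the earlier limit results, with the calibration term tending to $\tfrac{1}{2}$ because the MSE and predictive variance limits coincide. The only cosmetic difference is that the paper uses the exact finite-$n$ identity that the expected squared error equals the predictive variance (so the ratio term is exactly $1$ before taking any limit), whereas you match the two limits from the lemmas; both are valid and yield the stated expansion.
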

\begin{proof}\label{proof:nll_limit}
    The proof follows straightforwardly from \autoref{lem:gpnn_limit} and because \(\predvar_N = \sigma_f^2 + \sigma_\xi^2 - \kstar_N^TK_N^{-1}\kstar_N\).
    \begin{align*}
        2\Ey{l^*_N} &= \Ey{\log\predvar_N + \frac{(y^*-\mu_N^*)^2}{\predvar_N} + \log2\pi } \\
        &= \log\predvar_N + 1 + \log2\pi \\
        \lim_{n\rightarrow\infty}2\Ey{l^*_N} &= \log(\sigma_f^2 + \sigma_\xi^2 -(\sigma_f^2 - \sigma_\xi^2 m^{-1} + \bigO{m^{-2}})) + 1 + \log2\pi \\
        &= \log\left(\sigma_\xi^2(1+m^{-1}) - \bigO{m^{-2}}\right) +1 + \log2\pi \\
        &= \log\sigma_\xi^2 + m^{-1} + 1 + \log2\pi - \bigO{m^{-2}}.
    \end{align*}
\end{proof}

\subsubsection{Full misspecification}
For the remainder of \ref{sec:limit_proofs} we assume that the full range of possible misspecifications ((a)-(e)) outlined in \autoref{misspec} are in action. We refer to this case as ``fully-misspecified'' and introduce the notation \(\hat{\mu}_N^*,\predvar[\hat]_N\) to be understood to mean the predictive mean and variance under these misspecifications.
\begin{lemma}[Fully misspecified MSE limit]
    \label{lem:mismatched_mse_limit}
    For a fully misspecified model, asymptotically
    \[\lim_{n\rightarrow\infty}f_n^{\MSE}(\hat{\bm{\theta}}) = \sigma_\xi^2(1+m^{-1}) \pm \bigO{m^{-2}}.\]
    provided the misspecified kernel distance metric is \emph{weakly faithful} in the sense that the \(m^{th}\) nearest neighbour converges under both the true and misspecified metrics (\autoref{def:weakly_faithful}).
\end{lemma}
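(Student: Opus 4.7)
The plan is to mirror the structure of the proof of Lemma~\ref{lem:mse_limit}, but with the model-side quantities $\hat{\kstar}_N, \hat{K}_N$ replaced by their misspecified versions while the true data distribution still governs every expectation. The weak-faithfulness hypothesis is the key enabler: the $m$ neighbours chosen under $\hat{\rho}$ still converge to $\bm{x}^*$ under $\rho$ (and under $\hat{\rho}$), so Lemma~\ref{lem:eps_limit} applies simultaneously to both metrics and the kernel matrix entries concentrate in both the hat and unhat versions. This yields the limits $\hat{\kstar}_N \to \hat{\sigma}_f^2 \bm{1}$, $\hat{K}_N \to \hat{K}^\infty := \hat{\sigma}_\xi^2 I + \hat{\sigma}_f^2 \bm{11}^T$, and on the data side $\kstar_N \to \sigma_f^2\bm{1}$, $K_N \to K^\infty = \sigma_\xi^2 I + \sigma_f^2\bm{11}^T$.

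Next I would expand
\[
\Ey{(y^* - \hat{\mu}_N^*)^2} = \sigma_f^2 + \sigma_\xi^2 + \E_\mathbf{y}\{(\hat{\mu}_N^*)^2\} - 2\Ey{\hat{\mu}_N^* y^*},
\]
and reduce each expectation using Lemma~\ref{lem:wsrf_exp}. Crucially, the expectation injects the \emph{true} covariances: the cross term becomes $\hat{\kstar}_N^T \hat{K}_N^{-1} \kstar_N$, and the squared-mean term becomes $\hat{\kstar}_N^T \hat{K}_N^{-1} K_N \hat{K}_N^{-1} \hat{\kstar}_N$, with the true $K_N$ sandwiched between the misspecified inverses. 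Taking limits and invoking continuity of inverse and multiplication, both reduce to quadratic forms in $\bm{1}$ against powers of $(\hat{K}^\infty)^{-1}$, with coefficients that mix true and misspecified hyperparameters.

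For the cross term I would reuse the Sherman--Morrison expansion (equation~\ref{eq:Kinf_sum}) applied to the hat-quantities, obtaining $\hat{\sigma}_f^2\sigma_f^2\, \bm{1}^T (\hat{K}^\infty)^{-1}\bm{1} = \sigma_f^2 - \sigma_f^2\hat{\sigma}_\xi^2 /(m\hat{\sigma}_f^2) + \bigO{m^{-2}}$. For the squared-mean term I would split $K^\infty = \sigma_\xi^2 I + \sigma_f^2 \bm{11}^T$, so the form decomposes as $\sigma_\xi^2\bm{1}^T(\hat{K}^\infty)^{-2}\bm{1} + \sigma_f^2(\bm{1}^T(\hat{K}^\infty)^{-1}\bm{1})^2$. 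A second Sherman--Morrison step gives $\bm{1}^T(\hat{K}^\infty)^{-2}\bm{1} = m/(\hat{\sigma}_\xi^2 + m\hat{\sigma}_f^2)^2$, and expanding yields $\E_\mathbf{y}\{(\hat{\mu}_N^*)^2\} \to \sigma_f^2 + \sigma_\xi^2/m - 2\sigma_f^2\hat{\sigma}_\xi^2/(m\hat{\sigma}_f^2) + \bigO{m^{-2}}$. Summing the three pieces, the $\bigO{m^{-1}}$ corrections carrying the misspecified hyperparameters cancel exactly between $\E_\mathbf{y}\{(\hat{\mu}_N^*)^2\}$ and $-2\Ey{\hat{\mu}_N^* y^*}$, leaving $\sigma_\xi^2(1 + m^{-1}) \pm \bigO{m^{-2}}$.

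The main obstacle is the bookkeeping of the $\bigO{m^{-1}}$ terms in the mixed quadratic form and verifying the hyperparameter cancellation explicitly; the $\pm$ notation in the statement hints that at $\bigO{m^{-2}}$ no such cancellation can be guaranteed, only that the remainder is controlled in magnitude. Conceptually, the content is that under weak faithfulness the model misspecification collapses to a pure hyperparameter mismatch, and this mismatch washes out of the leading-order MSE. Justifying the exchange of $\lim_n$ with the $m$-dimensional expectation is routine once the kernel entries converge, since inversion is continuous on $GL_m(\mathbb{R})$.
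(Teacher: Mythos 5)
Your proposal is correct and follows essentially the same route as the paper: the same decomposition of the MSE into $\E\{y^{*2}\}$, the cross term $\kstar_N^T\hat{K}_N^{-1}\kstar[\hat]_N$ and the sandwiched form $\kstar[\hat]_N^T\hat{K}_N^{-1}K_N\hat{K}_N^{-1}\kstar[\hat]_N$, the same limiting rank-one-plus-diagonal matrices justified by weak faithfulness, and the same Sherman--Morrison expansion with the misspecified $\bigO{m^{-1}}$ terms cancelling. The only (immaterial) difference is that you evaluate the sandwiched form by splitting $K^\infty$ and using $\bm{1}^T(\hat{K}^\infty)^{-2}\bm{1}$, whereas the paper multiplies out $(I-\hat{\gamma}\bm{11}^T)K^\infty(I-\hat{\gamma}\bm{11}^T)$ and expands $(1-m\hat{\gamma})^2$ directly (and note you dropped the $\hat{\sigma}_f^4$ prefactor when stating that decomposition, though your final limits are consistent with it).
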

\begin{proof}\label{proof:mismatched_mse_limit}
\begin{align*}
        \E_\mathbf{y}\bigg\{\E_\mathrm{y^*}\left[(y^* - \hat{\mu}_N^*)^2 \given \mathbf{y}\right] \bigg\}
        &= \E_\mathbf{y}\bigg\{\E_\mathrm{y^*}\left[{y^*}^2 - 2y^*\hat{\mu}_N^* + (\hat{\mu}_N^*)^2\given \mathbf{y}\right]\bigg\} \\
        &= \E_\mathbf{y}\left\{\predvar_N + \mu^{*\,2}_N - 2\mu^*_N\hat{\mu}_N^* + (\hat{\mu}_N^*)^2 \right\} \\
        &= \underbrace{\predvar_N}_{(a)} + \underbrace{\kstar_N^TK_N^{-1}{\kstar_N}}_{(b)} - 2\underbrace{\kstar_N^T\hat{K}_N^{-1}\kstar[\hat]_N}_{(c)} + \underbrace{\kstar[\hat]_N^T\hat{K}_N^{-1}K_N\hat{K}_N^{-1}\kstar[\hat]_N}_{(d)}.
    \end{align*}
    We can use standard results to state that \((a)+(b)=\sigma_f^2+\sigma_\xi^2\).
    Then we define \(\hat{\gamma}=\frac{\hat{\sigma}_f^2}{\hat{\sigma}_\xi^2+m\hat{\sigma}_f^2}\) and expand it in terms of \(m^{-1}\):
    \[1-m\hat{\gamma} = \frac{\hat{\sigma}_\xi^2}{m\hat{\sigma}_f^2} - \frac{\hat{\sigma}_\xi^4}{m^2\hat{\sigma}_f^4} + \bigO{m^{-3}}. \] 
    In a manner similar to \autoref{lem:gpnn_limit} we use this result to compute:
    \begin{align*}
        \lim_{n\rightarrow\infty}(c) &= \sigma_f^2\bm{1}^T\hat{\sigma}_\xi^{-2}(I-\hat{\gamma}\bm{11}^T)\bm{1}\hat{\sigma}_f^2 \\
        &=\frac{\sigma_f^2\hat{\sigma}_f^2}{\hat{\sigma}_\xi^2}m(1-m\hat{\gamma})\\
        &=\frac{\sigma_f^2\hat{\sigma}_f^2}{\hat{\sigma}_\xi^2}\left(\frac{\hat{\sigma}_\xi^2}{\hat{\sigma}_f^2} - \frac{\hat{\sigma}_\xi^4}{m\hat{\sigma}_f^4}\right) + \bigO{m^{-2}} \\
        &= \sigma_f^2 - \frac{\sigma_f^2\hat{\sigma}_\xi^2}{m\hat{\sigma}_f^2} + \bigO{m^{-2}}
    \end{align*}
    and
    \begin{align*}
        \lim_{n\rightarrow\infty}(d) &= \frac{\hat{\sigma}_f^4}{\hat{\sigma}_\xi^4}\bm{1}^T(I-\hat{\gamma}\bm{11}^T)(\sigma_\xi^2I+\sigma_f^2\bm{11}^T)(I-\hat{\gamma}\bm{11}^T)\bm{1}\\
        &=\frac{\hat{\sigma}_f^4}{\hat{\sigma}_\xi^4}\bm{1}^T\left[\sigma_\xi^2I + \sigma_f^2\bm{11}^T - 2\sigma_\xi^2\hat{\gamma}\bm{11}^T + \hat{\gamma}^2\sigma_\xi^2m\bm{11}^T -2\sigma_f^2\hat{\gamma}m\bm{11}^T + \sigma_f^2\hat{\gamma}^2m^2\bm{11}^T\right]\bm{1}\\
        &= \frac{\hat{\sigma}_f^4}{\hat{\sigma}_\xi^4}m(\sigma_\xi^2+m\sigma_f^2)\left[1-2m\hat{\gamma}+m^2\hat{\gamma}^2\right] \\
        &= \frac{\hat{\sigma}_f^4}{\hat{\sigma}_\xi^4}m(\sigma_\xi^2+m\sigma_f^2)(1-m\hat{\gamma})^2 \\
        &= \frac{\hat{\sigma}_f^4}{\hat{\sigma}_\xi^4}m(\sigma_\xi^2+m\sigma_f^2)\left(\frac{\hat{\sigma}_\xi^4}{m^2\hat{\sigma}_f^4} - 2\frac{\hat{\sigma}_\xi^6}{m^3\hat{\sigma}_f^6} + \bigO{m^{-4}}\right) \\
        &= \sigma_f^2 + \frac{\sigma_\xi^2}{m} - 2\frac{\sigma_f^2}{\hat{\sigma}_f^2}\frac{\hat{\sigma}_\xi^2}{m} \pm \bigO{m^{-2}},
    \end{align*}
    where we have used the expansion of \(1-m\hat{\gamma}\) given earlier. Putting these results together gives
    \begin{align*}
        \lim_{n\rightarrow\infty}f_n^{\MSE}(\hat{\bm{\theta}}) &= \lim_{n\rightarrow\infty} [(a) + (b) -2 (c) + (d)] \\
        &= \sigma_f^2 + \sigma_\xi^2 - 2\left(\sigma_f^2 - \frac{\sigma_f^2\hat{\sigma}_\xi^2}{m\hat{\sigma}_f^2}\right) + \sigma_f^2 + \frac{\sigma_\xi^2}{m} - 2\frac{\sigma_f^2}{\hat{\sigma}_f^2}\frac{\hat{\sigma}_\xi^2}{m} \pm \bigO{m^{-2}} \\
        &= \sigma_\xi^2(1+m^{-1}) \pm \bigO{m^{-2}}.
    \end{align*}
\end{proof}

\begin{lemma}[Calibration limit under full misspecification]
    \label{lem:mismatched_cal_limit}
    \begin{align*}
    \lim_{n\rightarrow\infty} f_n^{\CAL}(\hat{\bm{\theta}}) &= \frac{\sigma_\xi^2}{\hat{\sigma}_\xi^2} \pm \bigO{m^{-2}}.
    \end{align*}
\end{lemma}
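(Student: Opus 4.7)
The plan is to treat \(f_n^{\CAL}(\hat{\bm{\theta}})\) as the expected standardised squared residual, \(\E_{\mathbf{y},\mathrm{y^*}}\{(y^*-\hat{\mu}_N^*)^2/\predvar[\hat]_N\}\). Since \(\predvar[\hat]_N\) depends on the training inputs and the misspecified kernel only — not on the random responses — it is deterministic given \(X\) and \(\bm{x}^*\), and can be pulled out of the expectation. This immediately factors the calibration metric as
\[
f_n^{\CAL}(\hat{\bm{\theta}}) \;=\; \frac{\E_{\mathbf{y},\mathrm{y^*}}\{(y^*-\hat{\mu}_N^*)^2\}}{\predvar[\hat]_N} \;=\; \frac{f_n^{\MSE}(\hat{\bm{\theta}})}{\predvar[\hat]_N},
\]
so that the whole argument reduces to computing the two limits separately and dividing.

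The numerator is handled entirely by \autoref{lem:mismatched_mse_limit}, which gives \(f_n^{\MSE}(\hat{\bm{\theta}}) \to \sigma_\xi^2(1+m^{-1}) \pm \bigO{m^{-2}}\). For the denominator I would reprise the argument of \autoref{lem:gpnn_limit} verbatim but with hatted hyperparameters throughout: weak-faithfulness of \(\hat{\rho}\) together with \autoref{lem:eps_limit} forces \(\kstar[\hat]_N \to \hat{\sigma}_f^2\bm{1}\) and \(\hat{K}_N \to \hat{\sigma}_\xi^2 I + \hat{\sigma}_f^2 \bm{11}^T\), after which Sherman–Morrison and the same geometric expansion of \(1-m\hat{\gamma}\) used in \autoref{lem:mismatched_mse_limit} yield
\[
\lim_{n\to\infty} \kstar[\hat]_N^T \hat{K}_N^{-1} \kstar[\hat]_N \;=\; \hat{\sigma}_f^2 - \hat{\sigma}_\xi^2 m^{-1} + \bigO{m^{-2}},
\]
and therefore \(\predvar[\hat]_N = \hat{\sigma}_f^2 + \hat{\sigma}_\xi^2 - \kstar[\hat]_N^T\hat{K}_N^{-1}\kstar[\hat]_N \to \hat{\sigma}_\xi^2(1+m^{-1}) \pm \bigO{m^{-2}}\).

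Finally I would substitute both limits into the ratio and expand \((1 + m^{-1} + \bigO{m^{-2}})^{-1} = 1 - m^{-1} + \bigO{m^{-2}}\) as a geometric series, so that the shared \((1+m^{-1})\) factors cancel between numerator and denominator and leave \(\sigma_\xi^2/\hat{\sigma}_\xi^2 \pm \bigO{m^{-2}}\). The main obstacle I anticipate is purely bookkeeping: both numerator and denominator carry the same leading \((1+m^{-1})\) correction, and one has to be careful to verify that after cancellation the residual is genuinely \(\bigO{m^{-2}}\) rather than \(\bigO{m^{-1}}\) — this requires keeping the sign-agnostic \(\pm\bigO{m^{-2}}\) bounds from \autoref{lem:mismatched_mse_limit} explicit through the division step and invoking the continuity of \(x\mapsto 1/x\) away from zero to justify the series expansion in the limit.
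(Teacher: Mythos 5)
Your proposal is correct and follows essentially the same route as the paper: factor the calibration metric into the fully-misspecified MSE limit (from \autoref{lem:mismatched_mse_limit}) divided by the limiting misspecified predictive variance obtained by applying \autoref{lem:gpnn_limit} with hatted hyperparameters, then cancel the shared \((1+m^{-1})\) factors. The only differences are cosmetic — you justify the factorisation by noting the predictive variance is independent of the responses rather than by appeal to continuity, and you spell out the geometric-series division step that the paper leaves implicit.
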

\begin{proof}\label{proof:mismatched_cal_limit}
        We use continuity to write
    \begin{align*}
        \lim_{n\rightarrow\infty}\Ey{\frac{(y^*-\hat{\mu}_N^*)^2}{\predvar[\hat]_N}}  &= \left(\lim_{n\rightarrow\infty}\frac{1}{\predvar[\hat]_N}\right)\left(\lim_{n\rightarrow\infty}f_n^{\MSE}(\hat{\bm{\theta}})\right).
    \end{align*}
    By direct application of \autoref{lem:gpnn_limit} \(\predvar[\hat]_N\xrightarrow{n\rightarrow\infty} \hat{\sigma}_\xi^2(1+m^{-1}) - \bigO{m^{-2}}\) and thus
    \[\lim_{n\rightarrow\infty} f_n^{\CAL}(\hat{\bm{\theta}}) = \frac{\sigma_\xi^2}{\hat{\sigma}_\xi^2} \pm \bigO{m^{-2}}.\]
\end{proof}

\begin{corollary}[NLL limit under full misspecification]
    \label{cor:mismatched_nll_limit}
    \begin{align*}
    \lim_{n\rightarrow\infty}f_n^{\NLL}(\hat{\bm{\theta}}) &= \half\log\left(\hat{\sigma}_\xi^2(1 + m^{-1})\right) + \half\frac{\sigma_\xi^2}{\hat{\sigma}_\xi^2} +\half\log2\pi \pm \bigO{m^{-2}}. 
    \end{align*}
\end{corollary}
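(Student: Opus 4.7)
The plan is to mirror the strategy of \autoref{cor:nll_limit} almost verbatim, now with the misspecified quantities $\hat{\mu}_N^*$, $\predvar[\hat]_N$, and to use the two previous results (\autoref{lem:gpnn_limit} and \autoref{lem:mismatched_cal_limit}) as black boxes rather than re-doing any matrix algebra.

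Concretely, I would start by writing out the Gaussian NLL per test point,
\[
2\Ey{\hat{l}^*_N} = \log\predvar[\hat]_N + \frac{\Ey{(y^*-\hat{\mu}_N^*)^2}}{\predvar[\hat]_N} + \log 2\pi,
\]
which is valid because $\predvar[\hat]_N$ depends only on the training inputs and the (misspecified) kernel and hence can be pulled out of the expectation over $(\bm{y}_N,y^*)$. For the first term, I would invoke \autoref{lem:gpnn_limit} applied with the misspecified hyperparameters $\hat{\sigma}_f^2,\hat{\sigma}_\xi^2$ — which is licit because the misspecified metric is weakly faithful by assumption, so $\lim_{n\to\infty}\kstar[\hat]_N^T\hat{K}_N^{-1}\kstar[\hat]_N = \hat{\sigma}_f^2 - \hat{\sigma}_\xi^2 m^{-1} + \bigO{m^{-2}}$, which together with $\predvar[\hat]_N = \hat{\sigma}_f^2 + \hat{\sigma}_\xi^2 - \kstar[\hat]_N^T\hat{K}_N^{-1}\kstar[\hat]_N$ gives $\predvar[\hat]_N \to \hat{\sigma}_\xi^2(1+m^{-1}) - \bigO{m^{-2}}$. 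For the second term, the ratio is precisely $f_n^{\CAL}(\hat{\bm{\theta}})$, whose limit is given directly by \autoref{lem:mismatched_cal_limit} as $\sigma_\xi^2/\hat{\sigma}_\xi^2 \pm \bigO{m^{-2}}$.

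Passing the limit through the logarithm and through the reciprocal requires only continuity, which holds because $\predvar[\hat]_N$ tends to a strictly positive finite constant. Substituting the two limits into the displayed equation, dividing by two, and absorbing the sum of the two $\bigO{m^{-2}}$ slack terms into a single $\pm \bigO{m^{-2}}$ yields the stated expression.

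I do not expect a real obstacle here: all of the Sherman--Morrison expansion work, and the careful tracking of misspecified $\hat{\gamma}$ terms, has already been done in the two preceding results. The only point that deserves a moment of care is verifying that the big-$O$ bookkeeping is consistent — in particular, that the $\pm$ sign on the $\bigO{m^{-2}}$ in \autoref{lem:mismatched_cal_limit} (which arises from the mismatched MSE limit potentially carrying either sign) propagates to the NLL without cancellation issues. This is immediate since we are only adding, not subtracting, the two error terms.
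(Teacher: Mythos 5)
Your proposal is correct and follows essentially the same route as the paper: decompose the expected NLL into the $\log\predvar[\hat]_N$ term (whose limit $\hat{\sigma}_\xi^2(1+m^{-1})-\bigO{m^{-2}}$ comes from applying \autoref{lem:gpnn_limit} with the misspecified hyperparameters, legitimised by weak faithfulness) and the calibration ratio handled by \autoref{lem:mismatched_cal_limit}, then pass limits through $\log$ and the reciprocal by continuity. No gaps; your big-$O$ bookkeeping matches the paper's.
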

\begin{proof}\label{proof:mismatched_nll_limit}
We start with
    \begin{align*}
        2f_n^{\NLL}(\hat{\bm{\theta}}) &= \Ey{\log\predvar[\hat]_N + \frac{(y^*-\hat{\mu}_N^*)^2}{\predvar[\hat]_N} + \log2\pi}.
    \end{align*}
        For the second term we use \autoref{lem:mismatched_cal_limit} so that we have 
    \begin{align*}
        \lim_{n\rightarrow\infty}2f_n^{\NLL}(\hat{\bm{\theta}}) &= \log\hat{\sigma}_\xi^2 +  m^{-1} + \frac{\sigma_\xi^2}{\hat{\sigma}_\xi^2} +\log2\pi \pm \bigO{m^{-2}}.
    \end{align*}
\end{proof}

\begin{proof}[Proof of \autoref{thm:gpnn_limit}]\label{proof:gpnn_limit}
    We construct the proof using all of the intermediate results given above. In particular \autoref{eq:mse_lim} follows from \autoref{lem:mismatched_mse_limit}, \autoref{eq:cal_lim} from \autoref{lem:mismatched_cal_limit} and \autoref{eq:nll_lim} from \autoref{cor:mismatched_nll_limit}.
\end{proof}

\section{Validity of \autoref{alg:gpnn_robust_sim} (Proof of \autoref{lem:alg1})}
\label{proof:alg1}
\begin{proof}[Proof of \autoref{lem:alg1}]

We prove the result for MSE, the proofs for NLL and calibration being essentially identical.
\newline\newline
The proof involves showing that \autoref{alg:gpnn_robust_sim} is exactly equivalent to \autoref{alg:robust_sim_1b}, an alternative scheme which self-evidently provides valid estimates of MSE on size-$n$ synthetic datasets:

\counterwithin*{algorithm}{section}
\setcounter{algorithm}{1}
\renewcommand{\thealgorithm}{1\alph{algorithm}}
\begin{algorithm}[ht]
\begin{enumerate}
    \item Generate a size-$n$ set of i.i.d. training $\bm{x}$-values $X = \{\bm{x}_i\}_{i=1}^{n}$ with $\bm{x}_i \sim P_\mathbf{x}$ ($X$ is held constant henceforth).
    \item Generate $n^*$  i.i.d. test points $\bm{x}_i^* \sim P_\mathbf{x}$. 
    \item Generate $n^*$ separate independent synthetic GP samples $\bm{y}_i \in \mathbb{R}^{(n+1)}$ corresponding to each of the values $(\bm{x}_i^*,X)$.  
    \item From within $X$ take the $m$ nearest-neighbours $N(\bm{x}_i^*)$ to the test point $\bm{x}_i^*$.
    \item Corresponding to each test point $\bm{x}_i^*$ evaluate the function ${e_i^*}' = f_i(\bm{y}_i) = (y_i^* - \mu_N^*(\bm{y}_i') )^2$. Where, as defined in \autoref{eq:gpnn_mean_pred}, $\mu_N^* = \kstar[\hat]_N^T \hat{K}_N^{-1}\bm{y_i}'$ with $N = N(\bm{x}_i)$ and where $\bm{y}_i' \in \mathbb{R}^m$ is formed from the components of $\bm{y}_i$ corresponding to $N = N(\bm{x}_i)$. \label{alg:line:eval}
    \item Compute the average $\frac{1}{n^*} \sum_{i=1}^{n^*} {e_i^*}'$ to obtain the MSE statistic.
\end{enumerate}
\caption{Simulation of GPnn Robustness and Limiting Behaviour (Expensive)}
\label{alg:robust_sim_1b}
\end{algorithm}

\autoref{alg:robust_sim_1b} clearly provides a valid evaluation of the MSE arising from the full GPnn prediction process described in \autoref{predictive_measures} (albeit at prohibitive expense for large $n$). Hence it is sufficient to prove that \autoref{alg:gpnn_robust_sim} is equivalent to \autoref{alg:robust_sim_1b}. We do so by applying two minor alterations to \autoref{alg:robust_sim_1b} in succession whose combined effect is to convert it to \autoref{alg:gpnn_robust_sim}. We also show that throughout this process equivalence is maintained with \autoref{alg:robust_sim_1b} thus completing the proof:
\newline\newline
Change 1: Let $\bm{y}_i = (y_i^*,\bm{y}_i',\bm{y}_i'')$ where $(y^*_i,\bm{y'}_i) \in \mathbb{R}^{(m+1)}$ with $\bm{y}_i'$ corresponding to the $\bm{x}$-values in $N(\bm{x}_i^*)$. Since each function $f_i(\bm{y}_i)$ in line~\ref{alg:line:eval} above only depends on the components $(y_i^*,\bm{y}_i')$ of the $(n+1)$-long vector $\bm{y}_i$ we can equally write ${e_i^*}' = f_i(y_i^*,\bm{y}_i')$ and hence truncate each of the vectors $\bm{y}_i$ to $(y_i^*,\bm{y}_i')$ before evaluating ${e_i^*}'$ in 5 and this clearly leaves the output of \autoref{alg:robust_sim_1b} completely unchanged.
\newline\newline
Change 2: Note (by a standard property of multivariate normal distributions) that each of the truncated vectors $(y_i^*,\bm{y}_i')$ are i.i.d. from the multivariate normal distribution whose covariance $\Sigma$ is the kernel gram matrix corresponding to just $(\bm{x}_i^*, N(\bm{x}_i^*))$. Hence it is legitimate to change the (inefficient) way of sampling $(y_i^*,\bm{y}_i')$ (via the large sample $\bm{y}$) to direct sampling from $\mc{N}(\bm{0},\Sigma)$ whilst still maintaining equivalence with \autoref{alg:robust_sim_1b}. 
\newline\newline 
In this way we arrive at \autoref{alg:gpnn_robust_sim}.
\end{proof}

\section{Parameter Calibration (Proof of \autoref{calibration_lemma})}
\label{calibration_phase_proof}

\begin{proof}[Proof of \autoref{calibration_lemma}]
  (a) Replacing parameters $\hat{\bm{\theta}} = (\hat{l},\hat{\sigma}_\xi^2,\hat{\sigma}_f^2)$ with $\hat{\bm{\theta}'} = (\hat{l},\alpha \hat{\sigma}_\xi^2,\alpha \hat{\sigma}_f^2)$ changes all of the ${\sigma_i^*}^2$ values to $\alpha {\sigma_i^*}^2$ and therefore changes the calibration value on $C$ from $\alpha = \frac{1}{c}\sum_{i=1}^c \frac{(y_i^* - \mu_i^*)^2}{\predvar_i}$  to $\alpha/\alpha = 1$. (b) The NLL on $C$ arising from parameters $(\hat{l},\alpha \hat{\sigma}_\xi^2,\alpha \hat{\sigma}_f^2)$ is $\frac{1}{2c} \sum_{i=1}^c \{ \log(\alpha \hat{\sigma}_\xi^2) + (y_i^*-\mu_i^*)^2/(\alpha \predvar_i) + \log2\pi)\}$ which, on taking first and second derivatives w.r.t. $\alpha$, is found to be uniquely minimised by $\alpha = \frac{1}{c}\sum_{i=1}^c \frac{(y_i^* - \mu_i^*)^2}{\predvar_i}$. (c) It is easily shown that replacing parameters $(\hat{\sigma}_\xi^2,\hat{\sigma}_f^2)$ by $(k\hat{\sigma}_\xi^2,k \hat{\sigma}_f^2)$ (for any $k > 0$) in the formula for $\mu^*$ (\autoref{eq:gp_mean_pred} and \autoref{eq:gpnn_mean_pred}) does not alter $\mu^*$. Hence the value of $\mbox{MSE} = \frac{1}{n^*} \sum_{i=1}^{n^*} (y^*_i - {\mu_i^*})^2$ on any size-$n^*$ test set is unchanged when parameters $\hat{\bm{\theta}}'$  are used in place of $\hat{\bm{\theta}}$.
\end{proof}

\section{Real world datasets}\label{appx:real_world_data}
We consider a variety of datasets from the standard UCI machine learning repository\footnote{\url{https://archive-beta.ics.uci.edu}, accessed April 2023.}.
These datasets are commonly used in the GP literature (see \cite{Gard1} for instance) and are, in principle, easily available online.
In practice, we encountered some difficulties: the dataset documentation is often limited; the dataset names  commonly used in other published papers  do not always match the UCI database naming and important details about data pre-processing, which features to use etc, are often omitted.
There are numerous attempts on GitHub and elsewhere at cataloguing these datasets along with any pre-processing, however we had limited success using them, with many appearing unmaintained. 
Our focus in this work is on testing our methods on a variety of real world datasets and in a way that is, as far as possible, consistent with other papers. We therefore rejected datasets about which there is ambiguity over the correct features to use, or even which column to regress on or for which outlier rejection is required but undocumented elsewhere.

\medskip
Referring to the datasets used in \cite{Gard1}, we were able to locate the following: 
\begin{itemize}
    \item Song (\url{https://archive.ics.uci.edu/ml/machine-learning-databases/00203/YearPredictionMSD.txt.zip})
    \item Bike (\url{https://archive.ics.uci.edu/ml/machine-learning-databases/00275/Bike-Sharing-Dataset.zip})
    \item Poletele (\url{https://archive.ics.uci.edu/ml/machine-learning-databases/parkinsons/telemonitoring/parkinsons_updrs.data})
    \item Keggdirected (\url{https://archive.ics.uci.edu/ml/machine-learning-databases/00220/Relation\%20Network\%20(Directed).data})
    \item Keggundirected (\url{https://archive.ics.uci.edu/ml/machine-learning-databases/00221/Reaction\%20Network\%20(Undirected).data})
    \item CTSlice (\url{https://archive.ics.uci.edu/ml/machine-learning-databases/00206/slice_localization_data.zip})
    \item Road3d (\url{https://archive.ics.uci.edu/ml/machine-learning-databases/00246/3D_spatial_network.txt})
    \item Protein (\url{https://archive.ics.uci.edu/ml/machine-learning-databases/00265/CASP.csv})
    \item Buzz (\url{https://archive.ics.uci.edu/ml/machine-learning-databases/00248/regression.tar.gz})
    \item HouseElectric (HouseE) (\url{https://archive-beta.ics.uci.edu/dataset/235/individual+household+electric+power+consumption})
\end{itemize}

\medskip
We were unable to find any documentation on the Kegg datasets to indicate which of the columns should be used as the independent variable (the regressor) and neither is this mentioned in any literature of which we are aware.
Initial runs of standard exact GP training and prediction produced RMSEs much higher that reported in \cite{Gard1}.
Combining these two observations, we chose to exclude both Kegg datasets. Likewise we faced problems with Buzz. An analysis of the $y$ values revealed a small proportion of extremely large outliers that we found could unduly distort performance results (e.g. depending on whether these outliers appeared in the test set for some of the random splits). With the lack of documentation we were unable to identify an outlier rejection scheme that we were confident would be consistent with results quoted in other papers. For this reason we have excluded Buzz. 

\medskip

The choice of $(\bm{x},y)$ value  that we applied for each of the used datasets is as follows: 
\begin{itemize}
    \item Song. The first column is $y$, all remaining columns are $\vec{x}$.
    \item Bike. We use \texttt{hour.csv}. The $y$ value is \texttt{cnt}. \texttt{dteday} (the date) is transformed to just be the integer representation of the day. \texttt{instant} is just an index so is dropped. \texttt{registered} and \texttt{casual} are dropped as \texttt{registered} $+$ \texttt{casual} $=$ \texttt{cnt}.
    \item Poletele. The $y$ value is \texttt{total\_UPDRS}. The columns \texttt{subject\#} and \texttt{test\_time} are not relevant to the problem so are dropped.
    \item CTSlice. $y$ value is the final column. The first column is dropped as it is just an index. We additionally drop six columns which are constant over the majority of the dataset, namely columns 59, 69, 179, 189, 279 and 351.
    \item Road3d. $y$ value is the final column. The first column is dropped as it is just an index.
    \item Protein. This dataset was processed as per \url{https://github.com/hughsalimbeni/bayesian_benchmarks}, whereafter we used our own random (seeded) train/test split.
    \item HouseElectric. $y$ value is the column labelled ``Global active power'', rescaled by $1000/60$ and with ``Sub metering 1,2,3'' columns subtracted. We convert the date column into day-of-year/365 and the time column into time of day in minutes. Further, we remove any rows with null entries.
\end{itemize}

We note that although we are using a standard set of real-world datasets, it is not always clear exactly how others in the field have carried out their own preprocessing, limiting the ability to make direct comparisons to other results reported in the literature.

\section{Additional Implementation Details}\label{appx:distrib_var}
\label{appx:more_implementational_details}
\subsection{Pre-whitening of Data}
\label{appx:prewhiten}
For all datasets covered in \autoref{real_world_data} the following ``whitening'' preprocessing step is adopted:
Let $\vec{y}$ be the vector of all regressor values in the {\it training} dataset only, and $X$ the matrix of all regressands in the {\it training} dataset only, where each row of $X$ is a feature.
Let $\mu_y, \sigma_y^2$ be the sample mean and variance of $\vec{y}$ respectively in the training dataset, then the whitened $y$ values used in both the training and test set are simply $\sigma_y^{-1}(y - \mu_y)$.
Let $\vec{\mu}_X, \Sigma_X$ be the sample mean and covariance matrix of $X$ respectively .
Let $\Sigma_X = M M^T$, then the whitened $x$ values in both the training and test data are $\frac{1}{\sqrt{d}}M^{-1}(\vec{x} - \vec{\mu}_X)$, where $d$ is the feature dimension of $X$. \textbf{Note}: the performance metrics given in \autoref{real_world_data} are expressed in terms of the whitened $y$ values rather than the $y$ values in their original form. This appears to be common practice in the literature and has no bearing on the {\it comparative} performance of the different methods within this paper.

\subsection{Test-Set Batching}
To prevent excessive memory consumption, we perform all predictions for the distributed and variational methods in batches of 1000 points at a time. Where this is not possible (e.g. for especially large datasets), we use smaller batches of 500 or 250 points, as appropriate.

\subsection{Additional Implementation Details for SVGP}
We use the sparse variational inducing point approach of \cite{Hensman}, following the implementation provided by GPyTorch, which in particular uses a Choleksy decomposition to parameterise the covariance matrix of the variational prior.
We broadly follow the SVGP implementation example provided by \url{https://docs.gpytorch.ai/en/stable/examples/04_Variational_and_Approximate_GPs/SVGP_Regression_CUDA.html}. In particular, we follow their example in using the Adam optimiser to train our model over 100 epochs with a minibatch size of 1024 and a learning rate of 0.01. We opt to use 1024 inducing points. All experiments under this method are run on a SageMaker ml.p3.2xlarge instance, consisting of a single Tesla V100 GPU with 16GB of memory.

\subsection{Additional Implementation Details for Distributed methods}
 A good introduction to distributed methods for Gaussian process inference is \cite{deisenroth2015distributed}. Here we run the product-of-experts (PoE) \cite{hinton2002training}, generalised product-of-experts (gPoE) \cite{cao2014generalized}, Bayesian committee machine (BCM) \cite{tresp2000bayesian}, robust Bayesian committee machine (rBCM) \cite{deisenroth2015distributed} and generalised robust Bayesian committee machine (GrBCM) \cite{amalg_SOD} following the recommendation in \cite{Cohen20b} to aggregate in $f$-space.
 There are three components to any distributed method: the hyperparameter inference, the \emph{partitioner} and the \emph{aggregator}.
Hyperparameter estimation is the same for all of the methods: we use the method in section 3.1 of \cite{deisenroth2015distributed}, randomly partitioning the entire training set into subsets of size 625 (or as close as possible with equal-sized experts given that in general $n$ is not a multiple of 625). A block diagonal approximation (with $n/625$ blocks) is then used to approximate to the full $n \times n$ gram kernel matrix. To recover hyperparameters with this we use Gaussian Process models with a zero prior mean and a scaled square-exponential kernel. Training is conducted using the Adam optimiser with a learning rate of 0.1 over 100 optimiser iterations. Once the hyperparameters are trained, we run our distributed prediction mechanism to evaluate performance against the test-set. The 625-sized partitioned blocks are referred to as ``experts'' and the shared hyperparameter values are distributed to each expert and held fixed thereafter. In the \emph{aggregator}, or distributed prediction phase, each expert produces an individual predictive distribution and these are then aggregated to a final predictive mean and variance for each of our test points. GRBCM prediction is a little more complex than this as it makes use of an additional ``communications'' expert as explained in \cite{amalg_SOD}, aggregating in $f$-space as recommended in \cite{Cohen20b}. We provide timing statistics for training these models.

We use our own GPyTorch-based implementation of distributed GP approximations.
All exact GP calculations are performed using GPyTorch using the default settings (so 20 Lanczos iterations throughout and a CG tolerance of 1 for hyperparameter inference, and $10^{-3}$ for posterior predictions).
For all of our experiments, we utilise an AWS t3.2xlarge instance (consisting of 8 Intel Skylake Processors and 32 GB of RAM).

\subsection{Reproducibility}
All code used to generate tables and figures in this document can be found at \url{https://github.com/ant-stephenson/gpnn-experiments/}.

\section{Related Work}
\label{appx:related_work}

\subsection{NNGP}
\label{appx:nngp}

An hierarchical Bayesian approach to nearest neighbour GPs is derived in \cite{Datta2016}, who construct a full stochastic process allowing an end-to-end probabilistic approach they term `NNGP' derived from a `parent' GP using collections of nearest neighbour sets forming a `reference' set. This work is modified in various ways in \cite{Finley2019} including by adaption to a hybrid empirical-Bayes/fully-Bayesian approach to improve scalability. This is described in Algorithm 5 which presents an MCMC-free approach (`conjugate NNGP') where some parameters are estimated via K-fold cross-validation and the remainder are given conjugate priors to allow exact posterior inference. Under this model, the marginal predictive distribution is Student-\(t\) with mean and variance expressions that match that given by GPnn up to a scaling factor on the variance. Here we will go into more detail to make this connection explicit. First of all, note that they use an alternative parameterisation to us, specifying an inverse lengthscale (\(\phi\)), a ratio (\(\alpha\)) of noise variance (\(\tau^2\)) to kernelscale (\(\sigma^2\)) and a vector of coefficients for their linear mean function (\(\bm{\beta}\), which we can neglect since we focus exclusively on mean-zero GPs). 

In Algorithm 5 they go on to give the following expressions for the predictive mean and variance:
\begin{align*}
    z &= M(s,N(s,k)) \\
    w &= \solve(M[N(s,k),N(s,k)],z) \\
    m_0 &= \widehat{y(s)} = \textrm{dot}(x(s), g(k)) + \textrm{dot}(w,(y[N(s,k)] - \textrm{dot}(X[N(s,k),],g(k)))) \\
    u &= x(s) - \textrm{dot}(X[N(s,k),],w) \\
    v_0 &= \textrm{dot}(u, \textrm{gemv}(V(k),u)) + 1 + \alpha - \textrm{dot}(w,z)  \\
    \widehat{\Var(y(s))} &= b^*_\sigma(k)v_0/(a^*_\sigma(k) - 1) 
\end{align*}

We can see that the predictive mean \(m_0\) matches our own in the mean-zero setting. In particular, in this case the only non-zero term is \(\textrm{dot}(w, y[N(s,k)])\) which can be re-written in our notation as \(\kstar_N^TK_N^{-1}\bm{y}_N\). Note that this expression is, for fixed \(\alpha\), independent of choice of kernelscale. This can be seen using the notation given in \autoref{intro}: \(\kstar_N^TK_N^{-1}\bm{y}_N = \sigma_f^2\cstar_N^T(\sigma_f^2 C_N + \sigma_\xi^2 I)^{-1}\bm{y}_N = \cstar_N^T(C_N+\alpha I)^{-1}\bm{y}_N\).

In addition to this, the predictive variance is equivalent to ours up to multiplicative scaling \( S = b^*_\sigma(k)/[\sigma_f^2 (a^*_\sigma(k) - 1)] \); i.e. $\widehat{\Var(y(s))} = S \predvar_N$, which be seen as follows: In the mean-zero case we have \(v_0 = 1 + \alpha - \textrm{dot}(w,z)\) which can again be re-written in our notation as \(1 + \sigma_\xi^2/\sigma_f^2 - \cstar_N^TC_{\alpha,N}^{-1}\cstar_N\) (where \(C_{\alpha,N}=C_N+\alpha I\)). Hence, if we rescale this by a factor of \(\sigma_f^2\) we exactly obtain the GPnn predictive variance, so that  $\widehat{\Var(y(s))} = b^*_\sigma(k)v_0/(a^*_\sigma(k) - 1) = b^*_\sigma(k)\predvar_N /[\sigma_f^2 (a^*_\sigma(k) - 1)]$ as claimed. Since GPnn uses an additional recalibration step to rescale all predictive variances by a single shared multiplicative factor, the factor $S$ between the two methods can be made effectively redundant: i.e. if our recalibration step (\autoref{alg:gpnn_calibration}) were applied to their method, the two methods would become equivalent {\it with respect to RMSE and weak-calibration} (setting aside differences in parameter estimation and their associated costs).

We reemphasise that whilst GPnn and Conjugate NNGP pointwise predictions can be related as above, the latter is given in the form of a \(t\)-distribution which, even with matched first two moments, will have a different shape to that of a Normal distribution. As such, the (Gaussian) NLL performance measure is no longer a valid choice to assess this model. RMSE and weak-calibration remain distribution-agnostic however.
\clrpg
\section{Further simulation results}
    \begin{figure}[ht]
        \centering
        \includegraphics[scale=0.445]{\detokenize{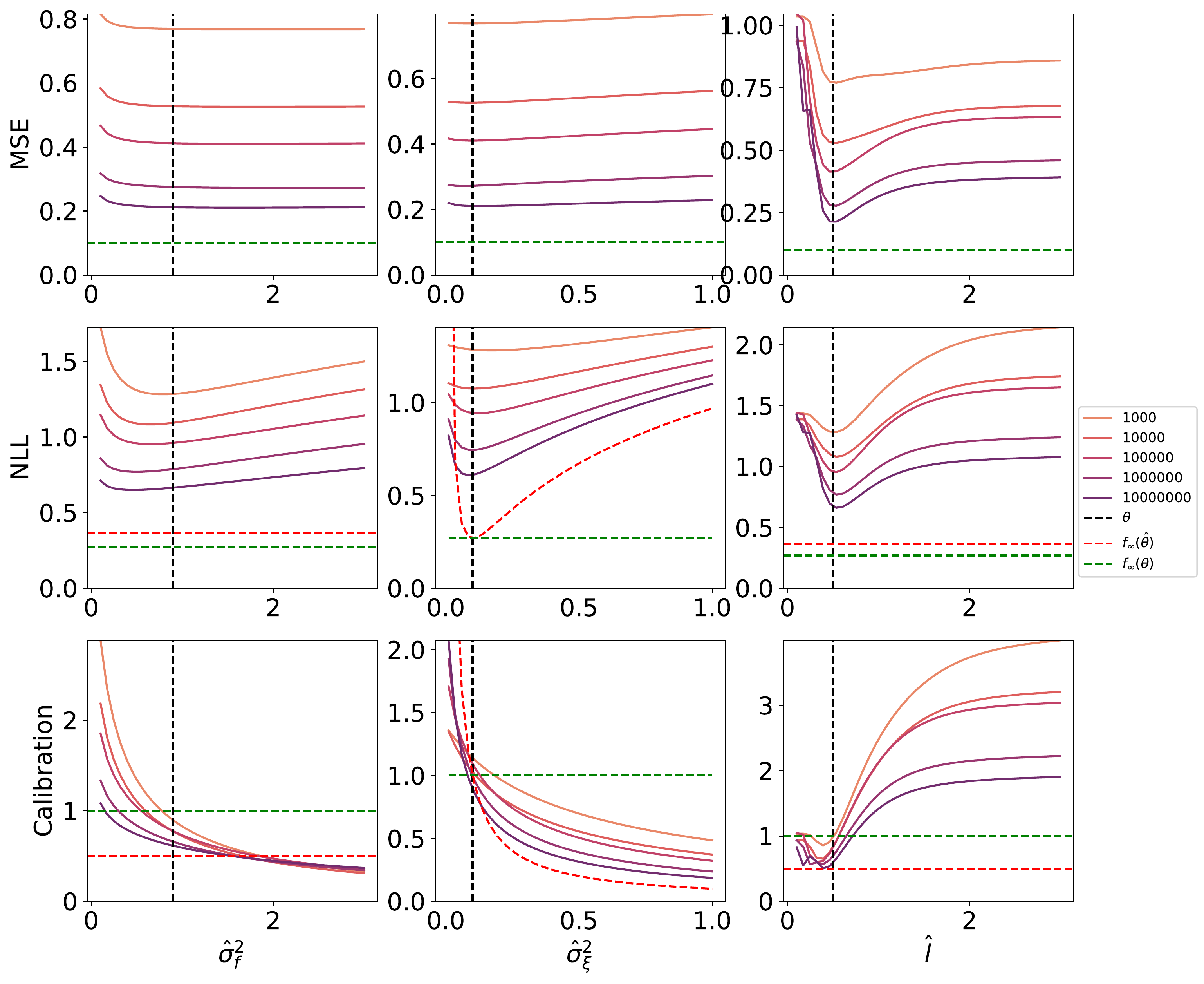}}
        \caption{Behaviour of performance metrics as functions of kernel hyperparameters for increasing training set sizes \(n\). The black dashed line denotes the true parameter value; the red dashed line shows the limiting behaviour as \(n\rightarrow\infty\) and the green dashed line shows the limiting behaviour when the hyperparameters are correct. Simulations run with \(d=20, l=0.5, \sigma_\xi^2=0.1, \sigma_f^2=0.9\). Assumed parameters when constant: \(\hat{\sigma}_\xi^2=0.2, \hat{\sigma}_f^2=0.8, \hat{l}=0.5\).}
        \label{fig:sim_limits_grid}
    \end{figure}

    % \begin{figure}[ht]
    %     \centering
    %     \includegraphics[scale=0.44]{\detokenize{}}
    %     \caption{Behaviour of performance metrics as functions of kernel hyperparameters for increasing training set sizes \(n\) generated using the Oakley and O'Hagan function. The red dashed line shows the limiting behaviour as \(n\rightarrow\infty\) and the green dashed line shows the limiting behaviour if the hyperparameters were correct. Assumed parameters when constant: \(\hat{\sigma}_\xi^2=0.1, \hat{\sigma}_f^2=2.0, \hat{l}=8.5\), all using the Exponential kernel. True noise variance taken to be \(\sigma_\xi^2=0.1\).}
    %     \label{fig:sim_limits_oak}
    % \end{figure}

    %    \begin{figure}[ht]
    %     \centering
    %     \includegraphics[scale=0.44]{\detokenize{}}
    %     \caption{Behaviour of performance metrics as functions of kernel hyperparameters for increasing training set sizes \(n\) generated using the CIQ sampling method. The red dashed line shows the limiting behaviour as \(n\rightarrow\infty\) and the green dashed line shows the limiting behaviour when the hyperparameters are correct. Assumed parameters when constant: \(\hat{\sigma}_\xi^2=0.2, \hat{\sigma}_f^2=0.8, \hat{l}=0.3\). True noise variance taken to be \(\sigma_\xi^2=0.1\); \(d=10\) input dimensions used.}
    %     \label{fig:sim_limits_ciq}
    % \end{figure}
\clrpg
\pagebreak
\section{Further Results on UCI Datasets}
\subsection{Results for all distributed methods}
\begin{table}[ht]
\centering\small
\caption[]{Results for all methods on all metrics.}
\label{tab:metrics_all_long}
\begin{tabular}{lllllll}
\toprule
 &  &  &  & \textbf{Calibration} & \textbf{NLL} & \textbf{RMSE} \\
Dataset & \(n\) & \(d\) & Model &  &  &  \\
\midrule
\midrule
\multirow[c]{7}{*}{Bike} & \multirow[c]{7}{*}{1.4e+04} & \multirow[c]{7}{*}{13} & BCM & 1.02 ± 0.02 & 1.0 ± 0.0065 & 0.66 ± 0.0043 \\
 &  &  & GPOE & 0.873 ± 0.012 & 1.03 ± 0.0069 & 0.664 ± 0.0054 \\
 &  &  & GRBCM & 0.893 ± 0.014 & 0.977 ± 0.0057 & 0.634 ± 0.004 \\
 &  &  & OURS & 0.974 ± 0.087 & 0.953 ± 0.013 & 0.624 ± 0.0079 \\
 &  &  & POE & 1.03 ± 0.022 & 1.01 ± 0.0083 & 0.664 ± 0.0054 \\
 &  &  & RBCM & \bfseries 1.01 ± 0.02 & 1.0 ± 0.0065 & 0.659 ± 0.0043 \\
 &  &  & SVGP & 0.898 ± 0.011 & \bfseries 0.93 ± 0.0043 & \bfseries 0.606 ± 0.0033 \\
\midrule
\multirow[c]{7}{*}{Ctslice} & \multirow[c]{7}{*}{4.2e+04} & \multirow[c]{7}{*}{378} & BCM & 5.04 ± 0.28 & 1.43 ± 0.13 & 0.311 ± 0.0052 \\
 &  &  & GPOE & 0.435 ± 0.013 & 0.422 ± 0.0015 & 0.347 ± 0.0027 \\
 &  &  & GRBCM & 1.13 ± 0.11 & -0.159 ± 0.052 & 0.237 ± 0.012 \\
 &  &  & OURS & \bfseries 1.04 ± 0.085 & \bfseries -1.26 ± 0.01 & \bfseries 0.132 ± 0.00062 \\
 &  &  & POE & 6.39 ± 0.27 & 2.08 ± 0.12 & 0.347 ± 0.0027 \\
 &  &  & RBCM & 4.16 ± 0.25 & 0.987 ± 0.11 & 0.28 ± 0.0048 \\
 &  &  & SVGP & 0.865 ± 0.026 & 0.467 ± 0.016 & 0.384 ± 0.0064 \\
\midrule
\multirow[c]{7}{*}{Houseelectric} & \multirow[c]{7}{*}{1.6e+06} & \multirow[c]{7}{*}{8} & BCM & 1.27 ± 0.0046 & -1.33 ± 0.0009 & 0.0634 ± 3.5e-05 \\
 &  &  & GPOE & 0.908 ± 0.0065 & -1.43 ± 0.0016 & 0.0638 ± 7.7e-05 \\
 &  &  & GRBCM & 1.25 ± 0.011 & -1.34 ± 0.0039 & 0.063 ± 0.00026 \\
 &  &  & OURS & \bfseries 1.08 ± 0.21 & \bfseries -1.56 ± 0.0065 & \bfseries 0.0506 ± 0.00072 \\
 &  &  & POE & 1.28 ± 0.006 & -1.32 ± 0.0018 & 0.0638 ± 7.7e-05 \\
 &  &  & RBCM & 1.24 ± 0.0054 & -1.34 ± 0.0013 & 0.0626 ± 5.2e-05 \\
 &  &  & SVGP & 0.911 ± 0.038 & -1.46 ± 0.0046 & 0.0566 ± 0.00011 \\
\midrule
\multirow[c]{7}{*}{Poletele} & \multirow[c]{7}{*}{4.6e+03} & \multirow[c]{7}{*}{19} & BCM & 1.07 ± 0.029 & 0.00035 ± 0.019 & 0.243 ± 0.0048 \\
 &  &  & GPOE & 0.917 ± 0.02 & 0.0344 ± 0.013 & 0.246 ± 0.0038 \\
 &  &  & GRBCM & 0.872 ± 0.024 & 0.0091 ± 0.015 & 0.241 ± 0.0033 \\
 &  &  & OURS & \bfseries 1.03 ± 0.073 & \bfseries -0.214 ± 0.019 & \bfseries 0.195 ± 0.0042 \\
 &  &  & POE & 1.1 ± 0.036 & 0.00772 ± 0.016 & 0.246 ± 0.0038 \\
 &  &  & RBCM & 1.08 ± 0.029 & 0.00309 ± 0.018 & 0.243 ± 0.0048 \\
 &  &  & SVGP & 0.862 ± 0.035 & -0.0667 ± 0.017 & 0.226 ± 0.0059 \\
\midrule
\multirow[c]{7}{*}{Protein} & \multirow[c]{7}{*}{3.6e+04} & \multirow[c]{7}{*}{9} & BCM & 1.04 ± 0.0097 & 1.14 ± 0.003 & 0.754 ± 0.0022 \\
 &  &  & GPOE & 0.925 ± 0.007 & 1.15 ± 0.0035 & 0.763 ± 0.0024 \\
 &  &  & GRBCM & 0.95 ± 0.012 & 1.11 ± 0.0051 & 0.733 ± 0.0038 \\
 &  &  & OURS & \bfseries 0.991 ± 0.029 & \bfseries 1.01 ± 0.0016 & \bfseries 0.666 ± 0.0014 \\
 &  &  & POE & 1.07 ± 0.0088 & 1.15 ± 0.0033 & 0.763 ± 0.0024 \\
 &  &  & RBCM & 1.03 ± 0.0096 & 1.13 ± 0.003 & 0.752 ± 0.0022 \\
 &  &  & SVGP & 0.908 ± 0.016 & 1.05 ± 0.0059 & 0.688 ± 0.0043 \\
\midrule
\multirow[c]{7}{*}{Road3D} & \multirow[c]{7}{*}{3.4e+05} & \multirow[c]{7}{*}{2} & BCM & 1.01 ± 0.017 & 0.753 ± 0.007 & 0.514 ± 0.0035 \\
 &  &  & GPOE & 0.756 ± 0.012 & 0.819 ± 0.0054 & 0.529 ± 0.0037 \\
 &  &  & GRBCM & 0.873 ± 0.011 & 0.685 ± 0.0041 & 0.478 ± 0.0023 \\
 &  &  & OURS & \bfseries 0.991 ± 0.041 & \bfseries 0.371 ± 0.004 & \bfseries 0.351 ± 0.0014 \\
 &  &  & POE & 1.07 ± 0.019 & 0.783 ± 0.0076 & 0.529 ± 0.0037 \\
 &  &  & RBCM & 0.976 ± 0.016 & 0.735 ± 0.0066 & 0.505 ± 0.0034 \\
 &  &  & SVGP & 0.9 ± 0.00094 & 0.608 ± 0.018 & 0.443 ± 0.008 \\
\midrule
\multirow[c]{7}{*}{Song} & \multirow[c]{7}{*}{4.6e+05} & \multirow[c]{7}{*}{90} & BCM & 1.56 ± 0.0063 & 1.32 ± 0.0012 & 0.851 ± 6.7e-05 \\
 &  &  & GPOE & 0.926 ± 0.00049 & 1.27 ± 3.4e-05 & 0.864 ± 7.5e-05 \\
 &  &  & GRBCM & 1.61 ± 0.11 & 1.46 ± 0.058 & 0.961 ± 0.035 \\
 &  &  & OURS & 0.99 ± 0.037 & \bfseries 1.18 ± 0.0045 & \bfseries 0.787 ± 0.0045 \\
 &  &  & POE & 1.61 ± 0.0067 & 1.34 ± 0.0013 & 0.864 ± 7.5e-05 \\
 &  &  & RBCM & 1.56 ± 0.0062 & 1.31 ± 0.0011 & 0.851 ± 6.4e-05 \\
 &  &  & SVGP & \bfseries 0.991 ± 0.02 & 1.24 ± 0.0012 & 0.834 ± 0.0011 \\
\bottomrule
\end{tabular}
\end{table}
\pagebreak

\subsection{Performance of different kernels}
\begin{table}[ht]\scriptsize
\setlength{\tabcolsep}{5pt} % Adjust the column separation
\caption{Results on the UCI datasets using different kernel choices for our method and demonstrating the apparent superiority of the exponential kernel in these cases.}
\label{tab:best_kernels}
\begin{tabularx}{\textwidth}{llllllll}
\toprule
 &  &  & \multicolumn{5}{l}{\textbf{Calibration}} \\
 &  &  & Distributed & Ours (Exp) & Ours (Matérn) & Ours (RBF) & SVGP \\
Dataset & \(n\) & \(d\) &  &  &  &  &  \\
\midrule
Poletele & 4.6e+03 & 19 & 0.872 ± 0.024 & \bfseries 0.994 ± 0.15 & 0.971 ± 0.13 & 1.03 ± 0.073 & 0.862 ± 0.035 \\
Bike & 1.4e+04 & 13 & 0.893 ± 0.014 & \bfseries 0.988 ± 0.098 & 0.971 ± 0.086 & 0.974 ± 0.087 & 0.898 ± 0.011 \\
Protein & 3.6e+04 & 9 & 0.95 ± 0.012 & \bfseries 0.995 ± 0.038 & 0.993 ± 0.031 & 0.991 ± 0.029 & 0.908 ± 0.016 \\
Ctslice & 4.2e+04 & 378 & 1.13 ± 0.11 & 0.912 ± 0.071 & \bfseries 1.04 ± 0.082 & 1.04 ± 0.085 & 0.865 ± 0.026 \\
Road3D & 3.4e+05 & 2 & 0.873 ± 0.011 & 1.09 ± 0.065 & \bfseries 1.0 ± 0.054 & 0.991 ± 0.041 & 0.9 ± 0.00094 \\
Song & 4.6e+05 & 90 & 1.56 ± 0.0063 & \bfseries 0.995 ± 0.033 & 0.994 ± 0.035 & 0.99 ± 0.037 & 0.991 ± 0.02 \\
Houseelectric & 1.6e+06 & 8 & 1.24 ± 0.0054 & 1.11 ± 0.29 & \bfseries 1.08 ± 0.27 & 1.08 ± 0.21 & 0.911 ± 0.038 \\
\bottomrule
\toprule
 &  &  & \multicolumn{5}{l}{\textbf{RMSE}} \\
 &  &  & Distributed & Ours (Exp) & Ours (Matérn) & Ours (RBF) & SVGP \\
Dataset & \(n\) & \(d\) &  &  &  &  &  \\
\midrule
Poletele & 4.6e+03 & 19 & 0.241 ± 0.0033 & \bfseries 0.169 ± 0.0076 & 0.17 ± 0.0076 & 0.195 ± 0.0042 & 0.226 ± 0.0059 \\
Bike & 1.4e+04 & 13 & 0.634 ± 0.004 & \bfseries 0.565 ± 0.0036 & 0.6 ± 0.0044 & 0.624 ± 0.0079 & 0.606 ± 0.0033 \\
Protein & 3.6e+04 & 9 & 0.733 ± 0.0038 & \bfseries 0.58 ± 0.0068 & 0.629 ± 0.004 & 0.666 ± 0.0014 & 0.688 ± 0.0043 \\
Ctslice & 4.2e+04 & 378 & 0.237 ± 0.012 & \bfseries 0.123 ± 0.004 & 0.126 ± 0.0024 & 0.132 ± 0.00062 & 0.384 ± 0.0064 \\
Road3D & 3.4e+05 & 2 & 0.478 ± 0.0023 & \bfseries 0.0976 ± 0.013 & 0.27 ± 0.01 & 0.351 ± 0.0014 & 0.443 ± 0.008 \\
Song & 4.6e+05 & 90 & 0.851 ± 6.7e-05 & \bfseries 0.776 ± 0.004 & 0.778 ± 0.0045 & 0.787 ± 0.0045 & 0.834 ± 0.0011 \\
Houseelectric & 1.6e+06 & 8 & 0.0626 ± 5.2e-05 & \bfseries 0.045 ± 0.00025 & 0.0485 ± 0.0004 & 0.0506 ± 0.00072 & 0.0566 ± 0.0001 \\
\bottomrule
\toprule
 &  &  & \multicolumn{5}{l}{\textbf{NLL}} \\
 &  &  & Distributed & Ours (Exp) & Ours (Matérn) & Ours (RBF) & SVGP \\
Dataset & \(n\) & \(d\) &  &  &  &  &  \\
\midrule
Poletele & 4.6e+03 & 19 & 0.0091 ± 0.015 & \bfseries -0.397 ± 0.028 & -0.346 ± 0.032 & -0.214 ± 0.019 & -0.0667 ± 0.017 \\
Bike & 1.4e+04 & 13 & 0.977 ± 0.0057 & \bfseries 0.854 ± 0.004 & 0.915 ± 0.0077 & 0.953 ± 0.013 & 0.93 ± 0.0043 \\
Protein & 3.6e+04 & 9 & 1.11 ± 0.0051 & \bfseries 0.853 ± 0.013 & 0.95 ± 0.0061 & 1.01 ± 0.0016 & 1.05 ± 0.0059 \\
Ctslice & 4.2e+04 & 378 & -0.159 ± 0.052 & -1.05 ± 0.027 & \bfseries -1.31 ± 0.017 & -1.26 ± 0.01 & 0.467 ± 0.016 \\
Road3D & 3.4e+05 & 2 & 0.685 ± 0.0041 & \bfseries -0.931 ± 0.14 & 0.109 ± 0.039 & 0.371 ± 0.004 & 0.608 ± 0.018 \\
Song & 4.6e+05 & 90 & 1.32 ± 0.0012 & \bfseries 1.16 ± 0.0046 & 1.17 ± 0.0051 & 1.18 ± 0.0045 & 1.24 ± 0.0012 \\
Houseelectric & 1.6e+06 & 8 & -1.34 ± 0.0013 & \bfseries -1.95 ± 0.028 & -1.62 ± 0.0095 & -1.56 ± 0.0065 & -1.46 ± 0.0046 \\
\bottomrule
\end{tabularx}
\end{table}
% \pagebreak

\subsection{Prediction times}
 \begin{table}[h]
    \centering
    \caption{Prediction times (in seconds) for GPnn and SVGP with 400 nearest-neighbours and 1024 inducing points respectively, over a small range of dataset sizes and dimensions.}
    \begin{tabular}{llll}
    \hline
    {\(\bm{n}\)} & {\(\bm{d}\)} & \textbf{GPnn} & \textbf{SVGP}\\
       \hline
       4.2e4 & 378 & 0.06   & 0.02  \\
       3.6e4 & 9   & 0.02   & 0.02  \\
       1.1e5 & 50  & 0.03   & 0.06  \\
       1.1e5 & 10  & 0.02   & 0.06  \\
        \hline
    \end{tabular}
    \label{tab:pred_times}
\end{table} 
\clrpg
\section{Overall Computational Expenditure}
Our distributed and variational method experiments were conducted using cloud computing resources. Experiments using our own method have been carried out on an author's laptop. 
SVGP experiments were run using a SageMaker virtual machine on a single Nvidia Tesla V100 GPU with 16GB memory. Distributed method experiments were run using eight Intel Xeon Platinum 8000 CPU cores (t3.2xlarge EC2 instances).

\medskip
Below we will attempt to give reasonable indications of the amount of computational work expended to obtain the results in this paper, though note that we are neglecting the work expended in the development and research stages that did not directly contribute to the runs in the paper.
As such, the costs presented are representative of the costs of replicating our paper, not repeating the research from scratch.
Instead of reporting costs in dollars, we will report approximate computing hours for each instance type.
The reader can then estimate their own costs using the current instance costs in the region of their choice, or under other cloud providers or even using on-premise compute.

\begin{table}[h]
\centering
\begin{tabular}{lrr}
    \toprule
    \textbf{Dataset}       &  \textbf{Billed hours (1 GPU, 3 runs)} & \textbf{Billed hours (8 CPUs, 3 runs of 5 methods)}  \\
    \midrule
    bike          &      0.027 & 0.222\\
    ctslice       &      0.082 & 0.546\\
    houseelectric &      3.713 & 256.776\\
    poletele      &      0.010 & 0.079\\
    protein       &      0.068 & 0.680\\
    road3d        &      0.635 & 31.674\\
    song          &      0.904 & 12.237\\
    \bottomrule
    \end{tabular}
    \end{table}

This gives a total of around 5.4 hours of compute time on a 1 GPU VM and 302.2 hours on an 8 CPU VM.
\clearpage

% \biblio

% \end{document}
\endgroup

% \bibliographystyle{abbrv}
% \bibliography{cited_refs}
% \nocite{*}

% \setcounter{page}{11} 

\end{document}